\documentclass[10pt]{article}
\usepackage[preprint]{tmlr}

\usepackage{amsmath,amsfonts,bm}

\def\eqref#1{equation~\ref{#1}}

\def\1{\bm{1}}

\DeclareMathAlphabet{\mathsfit}{\encodingdefault}{\sfdefault}{m}{sl}
\SetMathAlphabet{\mathsfit}{bold}{\encodingdefault}{\sfdefault}{bx}{n}

\usepackage{stmaryrd}

\usepackage[utf8]{inputenc} 
\usepackage[T1]{fontenc}    
\usepackage{hyperref}       
\usepackage{url}            
\usepackage{booktabs}       
\usepackage{amsfonts}       
\usepackage{nicefrac}       
\usepackage{microtype}      
\usepackage{xcolor}         
\usepackage{amsmath}
\usepackage{mathtools}
\usepackage{amsthm}
\usepackage{bm}
\usepackage{caption}
\usepackage{imakeidx}
\usepackage{multirow}
\usepackage{wrapfig}

\newtheorem{theorem}{Theorem}[section]

\newtheorem{remark}{Remark}[section]

\numberwithin{equation}{section}
\allowdisplaybreaks

\usepackage{colortbl}
\definecolor{tablegray}{gray}{0.9}
\usepackage{algorithm}
\usepackage{algpseudocode}

\usepackage{amsmath}
\usepackage{tikz}
\usepackage[scaled]{helvet}
\usetikzlibrary{arrows.meta,positioning,calc,backgrounds}
\definecolor{wblue}{HTML}{0000FF}
\definecolor{ablue}{HTML}{001DBC}
\definecolor{palDeep}{HTML}{522D60}   
\definecolor{palPlum}{HTML}{80426E}   
\definecolor{palGold}{HTML}{F2A67B}  
\definecolor{palGold}{HTML}{FFD38B}   
\newcommand{\dpx}{0.02cm}
\usepackage{makecell}
\usepackage{graphicx}
\usepackage{pifont}

\newcommand{\cmark}{\textcolor{green!70!black}{\ding{51}}}
\newcommand{\xmark}{\textcolor{red}{\ding{55}}}

\title{SpectrumQuant: Spectrum Alignment for LLM Quantization}

\author{Dwaipayan Haldar\thanks{These authors contributed equally to this work.}\\
   Indian Institute of Science\\
   Bengaluru-560012, Karnataka, India \\
   \texttt{dwaipayanh@iisc.ac.in} 
   \AND
   Indranil Patra\footnote[1] \qquad\\
   Indian Institute of Science\\
   Bengaluru-560012, Karnataka, India \\
  \texttt{indranilp@iisc.ac.in} 
   \AND
   Azhar Yousuf\footnote[1] \qquad\\
   Indian Institute of Science\\
   Bengaluru-560012, Karnataka, India \\
  \texttt{azharyousuf@iisc.ac.in} 
  \AND
   Chandra Sekhar Seelamantula \\
  Indian Institute of Science \\
  Bengaluru-560012, Karnataka, India \\
  \texttt{css@iisc.ac.in} \\
}

\begin{document}

\maketitle

\begin{abstract}
Orthogonal transforms have emerged as an effective tool for quantization of large language models (LLMs), as they reshape activation distributions without significantly affecting the functionality. Existing approaches typically rely either on data-agnostic transforms, such as Hadamard rotations, or on calibration-time optimization of learned rotations. We study rotation selection from first principles and show that maximizing covariance leads to spectrum alignment and provides a principled criterion for constructing quantization-friendly orthogonal transforms. Based on this insight, we propose SpectrumQuant as a method for post-training quantization of LLMs. SpectrumQuant relies on a COvariance-Aware orthogonal Transform (COAT) derived in \textit{closed-form} that estimates activation covariance from calibration data. Our formulation characterizes quantization-friendly transforms as compositions of alignment and mixing components, with the Hadamard and butterfly transforms as special cases. As such, SpectrumQuant bridges the gap between data-agnostic and learned rotation approaches, providing an efficient data-aware alternative for post-training quantization. Experiments on Llama and Mistral models show that SpectrumQuant is competitive with state-of-the-art rotation approaches while requiring less than one-third of the GPU memory and significantly less time than learned rotation approaches like SpinQuant and DartQuant.
\end{abstract}

\section{Introduction}
Large language models (LLMs) have demonstrated exceptional proficiency across a broad range of natural language processing applications, driving rapid growth in model scale. While this scaling has enabled strong capabilities, it has also created a major deployment bottleneck due to the high cost of inference. To improve inference efficiency, a variety of approaches have been explored, including pruning, distillation, and quantization  \citep{han2016deepcompressioncompressingdeep,hinton2015distillingknowledgeneuralnetwork,pruning, yao2022zeroquantefficientaffordableposttraining}. Among these, post-training quantization (PTQ) is particularly attractive because it reduces memory usage and inference cost by mapping weights and activations to low-precision representations, without requiring expensive retraining or fine-tuning. However, activation outliers remain a fundamental obstacle to accurate low-bit quantization in LLMs. Because a small number of extreme values can dominate the activation dynamic range, they substantially degrade low-bit quantization accuracy, motivating approaches that explicitly reshape activations into more quantization-friendly representations.

Activation quantization is typically more challenging than weight quantization in LLMs because extreme outliers can substantially distort the quantization range and degrade low-bit accuracy. Addressing this challenge, previous work has proposed several outlier-mitigation techniques, including mixed-precision models \citep{zhao2024atomlowbitquantizationefficient} and shifting the quantization burden from activations to weights \citep{lin2026awqactivationawareweightquantization,smoothquant}. Very recently, rotation-based approaches have proved to be a particularly effective approach \citep{chee2023quip, tseng2024quiphash}. Such approaches rely on rotational invariance, which allows rotation matrices to be integrated into nearby weights without altering the original network outputs. The resulting rotated basis disperses outlier magnitude across dimensions, leading to smoother activations and improved quantizability \citep{quarot,liu2025spinquantllmquantizationlearned,shao2026dartquant}.

Despite the empirical effectiveness of rotation-based quantization approaches, current approaches exhibit a clear trade-off. Fixed rotations such as the Hadamard transform
\citep{chee2023quip, quarot, tseng2024quiphash} are simple as well as efficient, but data-agnostic:
they apply the same transform to every model, even though activation outliers differ across
models and layers. On the other hand, learned rotations \citep{hu2025ostquantrefininglargelanguage, liu2025spinquantllmquantizationlearned,
shao2026dartquant, xu2026butterflyquantultralowbitllmquantization} adapt to the activation
statistics of each model, but they rely on iterative optimization, which is computationally expensive and lacks theoretical guarantees. Nevertheless,  learned rotation approaches require specialized procedures to enforce orthogonality constraints, and the underlying search-based approximation does not assure convergence to a global optimum. Additionally, such learned rotations introduce a substantial calibration cost in terms of memory and time requirement. Calibration costs are difficult to reconcile with the practical appeal of post-training quantization, whose main advantage lies in enabling fast and lightweight deployment. 

To address this gap, we propose SpectrumQuant, a data-aware and mathematically grounded approach to LLM post-training quantization. Instead of relying on either data-agnostic rotations or empirically learned rotations with no theoretical guarantees, SpectrumQuant derives quantization-friendly orthogonal transforms from first principles via a well-motivated covariance maximization objective. The resulting formulation yields a closed-form COvariance Aware orthogonal Transform (COAT) that combines a Hadamard backbone (mixing component) with an adaptive (alignment) component estimated from calibration data. Experiments across multiple prominent LLM architectures demonstrate that the proposed transform \emph{dramatically reduces the calibration costs} while yielding on-par quantization performance. Our contributions are two-fold:
\begin{itemize}
    \item We formulate a novel, well-motivated \textbf{covariance maximization objective} for rotation design in the realm of LLM post-training quantization. Following a first-principles approach, we derive a family of \textbf{closed-form} orthogonal transforms that unify fixed or data-independent structured rotations and data-dependent rotations within a single framework. 

    \item The proposed quantization method, termed SpectrumQuant, delivers high accuracy while reducing calibration cost by a significant margin. For instance, for the 70B model, SpectrumQuant completes calibration in 43 minutes on a single A6000 GPU with pre-saved activations. It achieves a \textbf{21$\times$ speedup} in total calibration time, and a \textbf{9.8$\times$ reduction in memory usage} relative to the state-of-the-art (SOTA) approaches like SpinQuant \citep{liu2025spinquantllmquantizationlearned} while giving on-par or better accuracy. Notably, SpectrumQuant can also be \textbf{calibrated efficiently on the CPU} unlike prior learned rotation approaches such as SpinQuant.
    
\end{itemize}

\section{Related Work}

Existing works employ both non-orthogonal and orthogonal transforms for post-training quantization of LLMs, which we briefly review below.

\subsection{Non-orthogonal Transforms}
Scaling-based transforms such as SmoothQuant \citep{smoothquant} address activation outliers by shifting the outliers from activations to weights, reducing activation quantization errors. OmniQuant \citep{shao2024omniquant} proposes learnable weight clipping and uses block-wise error minimization to fine-tune quantization errors. Although these approaches reduce activation quantization errors, they shift the difficulty to weight quantization, creating problems in the presence of extreme activation outliers. Recent approaches, such as WUSH \citep{wush} and CAT \citep{federici2026dissecting}, focus on weight and activation quantization as a pair to reduce the quantization error. However, unlike orthogonal transforms, non-orthogonal transforms cannot be combined with the preceding normalization layer, so they must be applied online and introduce additional inference overhead.

\subsection{Orthogonal Transforms}

Among rotation-based quantization approaches, QuIP \citep{chee2023quip} proposes incoherent processing for reducing the impact of outliers in both the weight and activation spaces. As an extension, QuIP\# \citep{tseng2024quiphash} improves speed by using random Hadamard transforms. Building on these approaches, which are primarily meant for weight-only quantization, QuaRot \citep{quarot} proposes activation and KV-cache quantization method by using random Hadamard rotations and incoherence processing to reduce outliers. 

SpinQuant \citep{liu2025spinquantllmquantizationlearned} demonstrates that certain random rotations significantly outperform others in LLM quantization, and introduces an approach that optimizes rotation matrices to maximize network accuracy. Unlike SpinQuant,OSTQuant \citep{hu2025ostquantrefininglargelanguage} treats both rotation and scaling matrices as network parameters and fine-tunes them in an end-to-end fashion, resulting in the transformations becoming non-orthogonal. In contrast, DartQuant \citep{shao2026dartquant} optimizes orthogonal rotations through a calibration-time loss, making it a calibration-based alternative rather than an end-to-end fine-tuned one. ButterflyQuant \citep{xu2026butterflyquantultralowbitllmquantization} finds the parameters of butterfly matrices via training, thereby reducing the calibration cost. 
In a nutshell, existing orthogonal-transform-based approaches can be broadly divided into two classes: data-agnostic random transforms, such as Hadamard-based approaches \citep{chee2023quip,quarot,tseng2024quiphash}, and data-dependent optimized transforms, either through end-to-end fine-tuning as in SpinQuant or through calibration-time loss optimization as in DartQuant. 
In contrast to these approaches, COAT \citep{patracoat} introduces a correlation-aware orthogonal transform and demonstrates application to LLM quantization. The proposed SpectrumQuant strikes a balance by adopting a first-principles approach to seek a \textit{data-aware orthogonal transform} and constructing it in \textit{closed-form} from calibration statistics, without iterative optimization. A qualitative characterization of some prominent LLM quantization approaches in terms of the nature of the underlying rotation transform (data-agnostic/learned/non-orthogonal/etc.) is given below in Table~\ref{tab:specificaiton}.

\begin{table*}[h]
\caption{Qualitative characterization of different LLM quantization approaches.}
\label{tab:specificaiton}
\centering
\resizebox{\textwidth}{!}{%
\begin{tabular}{l|c|c|c|c|c}
\toprule
& \makecell{Data-Agnostic \\ Rotation \\ \scriptsize{(QuaRot, QuIP\#)}} 
& \makecell{Learned \\ Rotation \\ \scriptsize{(SpinQuant, OSTQuant)}} 
& \makecell{Calibration-Optimized \\ Rotation \\ \scriptsize{(DartQuant, ButterflyQuant)}} 
& \makecell{Non-Orthogonal \\ Transform \\ \scriptsize{(WUSH, CAT)}} 
& \makecell{\textbf{SpectrumQuant} \\ \textbf{(Ours)}} \\
\midrule
Closed-form & \cmark & \xmark & \xmark & \cmark & \cmark \\
Data-aware & \xmark & \cmark & \cmark & \cmark & \cmark \\
Zero inference overhead & \cmark & \cmark & \cmark & \xmark & \cmark \\
\bottomrule
\end{tabular}%
}
\end{table*}

\section{Motivation}
The proponents of SpinQuant \citep{liu2025spinquantllmquantizationlearned} posed the 
following question: \emph{``Given an activation distribution with known outlier axes and magnitudes, can a closed-form solution for the optimal rotation matrix that evenly distributes magnitude across different axes be derived}?'' In this paper, we show that a covariance-maximization objective leads to a principled derivation of the optimal rotation matrix in closed-form. The initial intuition for the covariance-maximization approach came from a toy example in the two-dimensional case as depicted in Figure \ref{fig:transformation_illustration}. 
\begin{wrapfigure}{r}{0.45\textwidth}
    \centering
    \includegraphics[trim={0cm 0cm 0cm 0.27cm}, clip, width=\linewidth]{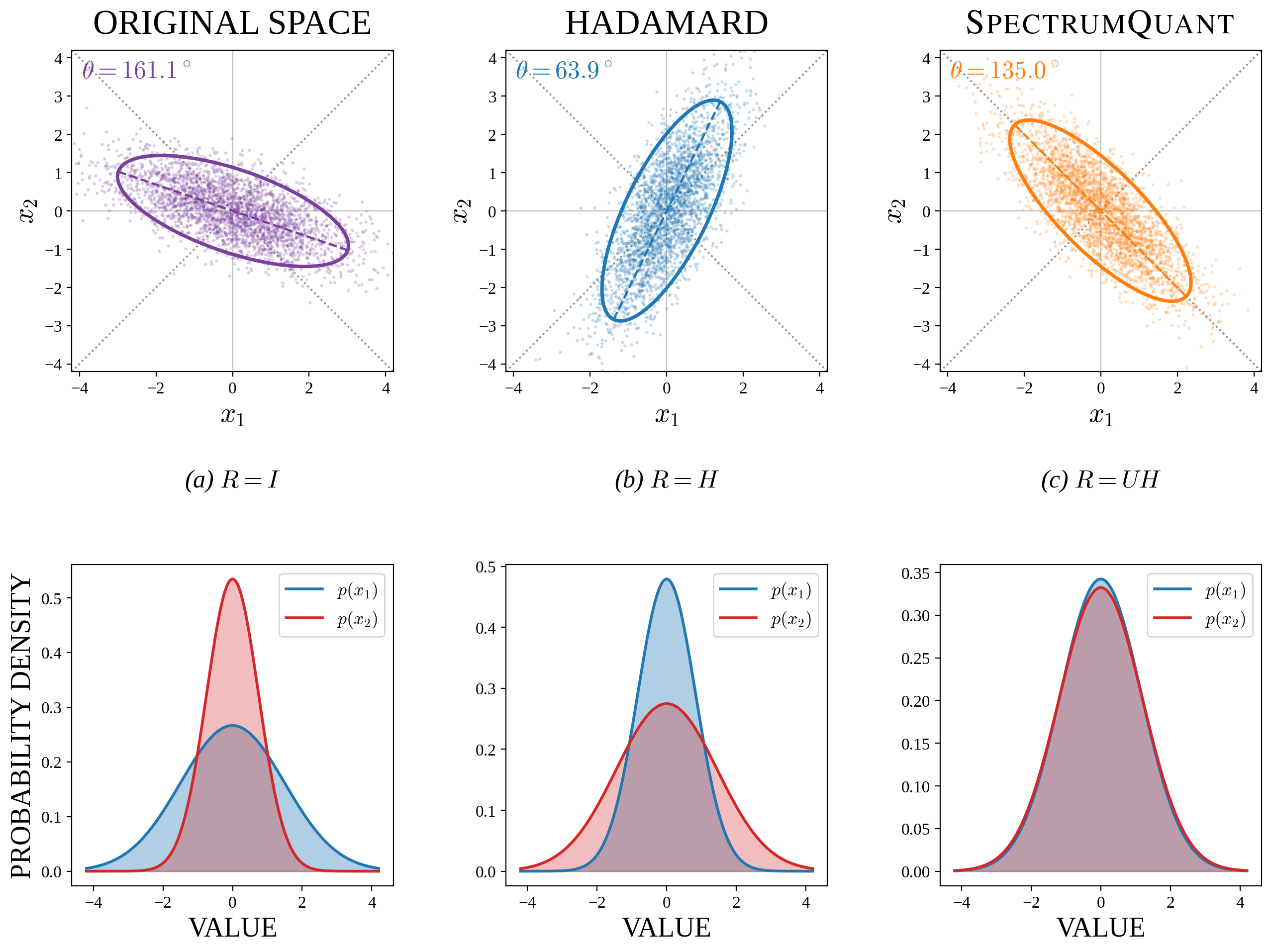}
    \caption{
    Left to right: The top row shows distribution in original space, Hadamard-transformed space, and proposed transform space. The bottom row shows corresponding marginals.}
    \label{fig:transformation_illustration}
\end{wrapfigure}
Under shared-scale quantization, the step size is dictated by the highest-variance coordinate, so a low-variance coordinate is coarsely binned and its values collapse to zero. Ideally, the data should be distributed so that it has equal variance in both coordinates, that is, its principal axis should be inclined along 45$^\circ$ (or, equivalently, 135$^\circ$). Since the 2D Hadamard matrix is a reflection transformation across the line inclined at $22.5^{\circ}$, it transforms a principal axis at angle $\varphi$ to $45^{\circ} - \varphi$. Thus, the Hadamard matrix achieves the ideal $45^{\circ}$ alignment only if the data is already axis-aligned ($\varphi = 0^{\circ}$)\footnote{The normalized Hadamard matrix $\bm{H}_2 = \frac{1}{\sqrt{2}}\begin{bmatrix} 1 & \phantom{-}1 \\ 1 & -1 \end{bmatrix}$ has the form of a reflection matrix $\begin{bmatrix} \cos 2\alpha & \phantom{-}\sin 2\alpha \\ \sin 2\alpha & -\cos 2\alpha \end{bmatrix}$ with $2\alpha = 45^{\circ}$, that is, a reflection across the line through the origin at angle $\alpha = 22.5^{\circ}$.}. This motivates 
first aligning the data using $\bm{U}$, which is the eigenbasis of the covariance matrix or simply the Karhunen–Loève Transform (KLT) \citep{klt}, before applying the Hadamard transform $\bm{H}$, resulting in a perfect $45^{\circ}$ mixing. The above description justifies a transform composed of an alignment component and a mixing component, which is precisely the idea behind our proposed SpectrumQuant. The theoretical foundation of SpectrumQuant is further supported by our analysis that covariance-maximization over the Stiefel manifold yields a variance-equalizing transform, which is shown to be optimal under shared scale quantization (detailed explanation can be found in Section~\ref{sec:right_objective}).

\section{SpectrumQuant}

We introduce SpectrumQuant as a novel quantization method derived from first principles. SpectrumQuant relies on a COvariance-Aware orthogonal Transform (COAT) derived in \textit{closed-form} that estimates activation covariance from calibration data. The closed-form formulation distinguishes SpectrumQuant from the prior art, where the transformations are either data-agnostic or learned using gradient descent optimization.

\subsection{Covariance Maximization Formulation} 
\label{section: covariance}

Let the data matrix be denoted by $\bm{X}\in\mathbb R^{n\times d}$, where $n$ is the number of data vectors and $d$ is the dimension of the data. Each row of $\bm{X}$ is a data vector. Without loss of generality, assume that the data is mean-centred. The sample covariance matrix is given by $\bm{\Sigma}_{\bm X}:=\tfrac{1}{n}\bm{X}^{\top}\bm{X}$, where $\bm X^\top$ denotes the transpose of $\bm X$. Consider the transformation $\bm{Y}=\bm{X}\bm{Q}$, where $\bm{Q}\in\mathbb R^{d\times d}$ is an orthogonal matrix. Evidently, $\bm Y$  is also mean-centred. The covariance matrix $\bm{\Sigma}_{\bm{Y}}(\bm{Q})$ of the transformed data $\bm{Y}$ is given by 
\begin{align}\label{eq:cov_transformed_data}
\bm{\Sigma}_{\bm{Y}}(\bm{Q}) \coloneq \dfrac{1}{n}\bm{Y}^{\top}\bm{Y} = \bm{Q}^{\top}\left(\dfrac{1}{n}\bm{X}^{\top}\bm{X}\right)\bm{Q}=\bm{Q}^{\top} \bm{\Sigma}_{\bm X} \bm{Q}.
\end{align}
Since our goal is to maximize the off-diagonal elements of the matrix $\bm{\Sigma}_{\bm{Y}}(\bm{Q})=\bm{Q}^{\top} \bm{\Sigma}_{\bm X} \bm{Q}$, we consider the following optimization problem over a Stiefel manifold \citep{edelman1998geometry}: 
\begin{align}\label{eq:optimization_problem}
J(\bm{Q}) \coloneq \max_{\bm{Q}} \left(\sum_{i\ne j} \big(\bm{\Sigma}_{\bm{Y}}(\bm{Q})\big)_{ij}^2\right),\quad \text{subject to}\quad \bm{Q}^{\top} \bm{Q} = \bm{I},
\end{align}
where  $\bm{I}\in\mathbb R^{d\times d}$ denotes the identity matrix. We convert the above maximization problem into a minimization by invoking invariance of the Frobenius norm to orthogonal transformations. Let $\left\|\bm{\Sigma}_{\bm{Y}}(\bm{Q})\right\|_\text{F}$ denote the Frobenius norm of the covariance matrix $\bm{\Sigma}_{\bm{Y}}(\bm{Q})$ defined via. 
\begin{align}\label{eq:Frobenius_norm}
\left\|\bm{\Sigma}_{\bm{Y}}(\bm{Q})\right\|_\text{F}^2=\sum_{i,j}\big(\bm{\Sigma}_{\bm{Y}}(\bm{Q})\big)_{ij}^2 = \sum_{i\ne j} \big(\bm{\Sigma}_{\bm{Y}}(\bm{Q})\big)_{ij}^2 
+ \sum_{i}\big(\bm{\Sigma}_{\bm{Y}}(\bm{Q})\big)_{ii}^2.
\end{align}
The optimization problem in \eqref{eq:optimization_problem} can be cast equivalently as follows:
\begin{align}\label{eq:final_optimization_problem} 
J(\bm{Q})\coloneq \max_{\bm{Q}} \left(\left\|\bm{\Sigma}_{\bm{Y}}(\bm{Q})\right\|_\text{F}^2-\sum_{i=1}^{d}\big(\bm{\Sigma}_{\bm{Y}}(\bm{Q})\big)_{ii}^2\right)
\equiv \min_{\bm{Q}} \left(\sum_{i=1}^{d}\big(\bm{\Sigma}_{\bm{Y}}(\bm{Q})\big)_{ii}^2\right),\quad \bm{Q}^{\top} \bm{Q} = \bm{I},
\end{align}
because the term $\left\|\bm{\Sigma}_{\bm{Y}}(\bm{Q})\right\|_\text{F}^2$ is  a constant offset and does not affect the optimization. That $\left\|\bm{\Sigma}_{\bm{Y}}(\bm{Q})\right\|_\text{F}^2$ is invariant to $\bm Q$ can  be verified by expressing the Frobenius norm in terms of the trace ($\bm{Tr}$) as follows: $\left\|\bm{\Sigma}_{\bm{Y}}(\bm{Q})\right\|_\text{F}^2=\bm{Tr}\left(\bm{\Sigma}_{\bm{Y}}(\bm{Q})^{\top}\bm{\Sigma}_{\bm{Y}}(\bm{Q})\right) = \bm{Tr}\left(\bm{Q}^{\top}\bm{\Sigma}_{\bm{X}}^\top\bm{\Sigma}_{\bm{X}}\bm{Q}\right) = \bm{Tr}\left(\bm{\Sigma}_{\bm{X}}^\top\bm{\Sigma}_{\bm{X}}\bm{Q}\bm{Q}^{\top}\right) = \bm{Tr}\left(\bm{\Sigma}_{\bm{X}}^\top\bm{\Sigma}_{\bm{X}}\right)$, and invoking the cyclic property of the trace operator. Hence, \eqref{eq:final_optimization_problem} serves as the desired minimization problem equivalent to the maximization problem in \eqref{eq:optimization_problem}.

\subsection{Solution Space}
We show that the objective $J(\bm{Q})$ in \eqref{eq:final_optimization_problem} is optimized by a variance-equalizing orthogonal transformation $\bm{Q}$. Since $\bm{Q}$ is orthogonal, we can always write $\bm{Q} = \bm{U}\bm{V}$, where $\bm{U}$ serves as the \emph{alignment} component which is the eigenvector matrix of $\bm{\Sigma}_{\bm{X}}$, and $\bm{V}$ is an appropriate \emph{mixing} transform. We demonstrate that there exists a class of such mixing components leading to a family of orthogonal transforms $\bm{Q} = \bm{U}\bm{V}$.

\begin{theorem}
\label{prop:minimization}
The minimization objective in \eqref{eq:final_optimization_problem} has a lower bound, which is attained if and only if the diagonal elements of $\bm{\Sigma}_{\bm{Y}}(\bm{Q})$ are all equal to the average of the eigenvalues of the covariance matrix $\bm{\Sigma}_{\bm{X}}$. Moreover, for every eigenvalue spectrum $\lambda_1 \ge \lambda_2 \ge \cdots \ge \lambda_d \ge 0$, the orthogonal transform $\bm{Q} = \bm{U}\bm{V}$ attains the bound if and only if $\bm V$ is a normalized Hadamard matrix.
\end{theorem}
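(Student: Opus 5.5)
The plan is to reduce everything to the diagonal of $\bm{\Sigma}_{\bm{Y}}(\bm{Q})$ and then apply a Cauchy--Schwarz (equivalently, QM--AM) argument.

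\textbf{Lower bound.} Write $d_i := \big(\bm{\Sigma}_{\bm{Y}}(\bm{Q})\big)_{ii}$. The trace is orthogonally invariant by the same cyclic argument the paper uses for the Frobenius norm, so
\[
\sum_i d_i = \bm{Tr}\big(\bm{Q}^\top\bm{\Sigma}_{\bm X}\bm{Q}\big) = \bm{Tr}(\bm{\Sigma}_{\bm X}) = \sum_i \lambda_i,
\]
independently of $\bm{Q}$. Cauchy--Schwarz against the all-ones vector then gives
\[
\sum_i d_i^2 \ \ge\ \frac{1}{d}\Big(\sum_i d_i\Big)^2 = \frac{1}{d}\Big(\sum_i\lambda_i\Big)^2 .
\]
Equality in Cauchy--Schwarz holds iff $(d_1,\dots,d_d)$ is proportional to the all-ones vector, i.e.\ every $d_i = \bar\lambda := \frac1d\sum_i\lambda_i$. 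This proves the first claim. The bound is attainable, since a Hadamard-type $\bm V$ achieves it, as shown next.

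\textbf{Reduction to $\bm V$.} Let $\bm{\Sigma}_{\bm X} = \bm U\bm\Lambda\bm U^\top$ with $\bm\Lambda=\mathrm{diag}(\lambda_1,\dots,\lambda_d)$. For $\bm Q=\bm U\bm V$ we get $\bm{\Sigma}_{\bm Y}=\bm V^\top\bm\Lambda\bm V$, so
\[
d_j=\sum_i \lambda_i V_{ij}^2 .
\]
\emph{Sufficiency:} if $\bm V$ is a normalized Hadamard matrix, then $V_{ij}^2=1/d$ for all $i,j$, hence $d_j=\bar\lambda$ for every $j$ and every spectrum, so the bound is attained.

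\emph{Necessity:} I will read ``for every eigenvalue spectrum'' as a uniform requirement, namely that a single $\bm V$ attains the bound for all admissible spectra. This reading is essential, since for the flat spectrum $\lambda_1=\dots=\lambda_d$ any orthogonal $\bm V$ works. Under this reading, I test with the nonincreasing spectra $\lambda^{(k)}=(1,\dots,1,0,\dots,0)$ having $k$ leading ones, for $k=1,\dots,d$. Optimality for $\lambda^{(k)}$ gives
\[
\sum_{i\le k}V_{ij}^2 = \frac{k}{d}\quad\text{for every } j .
\]
Subtracting consecutive values of $k$ yields $V_{kj}^2=1/d$ for all $k,j$. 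Thus $\bm V$ is an orthogonal matrix with entries $\pm 1/\sqrt d$, so $\sqrt d\,\bm V$ is a $\pm1$ matrix with orthogonal rows, which is exactly a (normalized) Hadamard matrix.

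\textbf{Main obstacle.} The delicate step is the quantifier issue in necessity. The test spectra must be nonincreasing and nonnegative, which the partial-sum family $\lambda^{(k)}$ satisfies. I would also remark on two related points. First, attainment for one fixed non-flat spectrum does not force the Hadamard structure; this is why the paper speaks of a class of mixing components. Second, a Hadamard matrix exists only for $d\in\{1,2\}$ or $4\mid d$ (conjecturally for all such $d$). So ``normalized Hadamard'' is shorthand for a unimodular-magnitude orthogonal matrix, and for other dimensions the ``if and only if'' should be understood as saying that no such real $\bm V$ exists.
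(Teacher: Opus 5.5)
Your proof is correct and is essentially the paper's. You use Cauchy--Schwarz against the all-ones vector plus trace invariance for the bound and its equality case, the reduction $\bm{\Sigma}_{\bm Y}(\bm Q)=\bm V^\top\bm\Lambda\bm V$ with sufficiency by substitution, and necessity by testing a single $\bm V$ against special spectra. The one difference tightens that last step: the paper plugs in $\bm e_k$, which for $k\ge 2$ is not a nonincreasing spectrum, whereas your staircase spectra $\lambda^{(k)}$ are admissible and their differences give $v_{kj}^2=1/d$ by linearity. One small correction to your aside: in dimensions with no Hadamard matrix the bound is still attained, via the Schur--Horn theorem rather than a Hadamard-type $\bm V$, although the statement itself only requires the equality characterization.
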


\begin{proof}
Two vectors $\bm{\alpha}=\left(\alpha_1,\alpha_2,\alpha_3,\dots,\alpha_d\right)\in\mathbb R^d$ and $\bm{\beta}=\left(\beta_1,\beta_2,\beta_3,\dots,\beta_d\right)\in\mathbb R^d$, satisfy the Cauchy-Schwartz inequality: $\left(\sum_{i=1}^{d} \alpha_{i}\beta_{i}\right)^2\le \left(\sum_{i=1}^{d} \alpha_{i}^2\right) \left(\sum_{i=1}^{d} \beta_{i}^2\right),$
with the equality holding if and only if $\bm{\alpha}=\kappa \bm{\beta}$, for some scalar $\kappa$. Setting $\alpha_i = \bm{\Sigma}_{\bm{Y}}(\bm{Q})_{ii}$ and $\beta_i = 1$, gives us the inequality
\begin{align}\label{eq:lower_bounding}
\sum_{i=1}^{d}\big(\bm{\Sigma}_{\bm{Y}}(\bm{Q})\big)_{ii}^2 \ge \dfrac{1}{d} \left(\sum_{i=1}^{d} \big(\bm{\Sigma}_{\bm{Y}}(\bm{Q})\big)_{ii}\right)^2,
\end{align}
with the equality holding if and only if the diagonal elements of $\bm{\Sigma}_{\bm{Y}}(\bm{Q})$ are all equal, that is, $\big(\bm{\Sigma}_{\bm{Y}}(\bm{Q})\big)_{11}=\big(\bm{\Sigma}_{\bm{Y}}(\bm{Q})\big)_{22}=\big(\bm{\Sigma}_{\bm{Y}}(\bm{Q})\big)_{33}=\dots=\big(\bm{\Sigma}_{\bm{Y}}(\bm{Q})\big)_{dd}$. In other words, the minimum in \eqref{eq:final_optimization_problem} is achieved for an orthogonal matrix $\bm{Q}$ satisfying the condition:
\begin{align}\label{eq:optimum_mat_full}
\dfrac{1}{d} \left(\sum_{i=1}^{d} \big(\bm{\Sigma}_{\bm{Y}}(\bm{Q})\big)_{ii}\right)^2=\dfrac{1}{d}\Big(d\,\big(\bm{\Sigma}_{\bm{Y}}(\bm{Q})\big)_{ii}\Big)^2.
\end{align}
For orthogonal $\bm{Q}$, by virtue of the cyclic property of the trace operator, $\bm{Tr}\left(\bm{\Sigma}_{\bm{Y}}(\bm{Q})\right) = 
\bm{Tr}\left(\bm{Q}^{\top} \bm{\Sigma}_{\bm X} \bm{Q}\right)= \bm{Tr}\left(\bm{\Sigma}_{\bm X}\bm{Q}\bm{Q}^{\top}\right) = \bm{Tr}\left(\bm{\Sigma}_{\bm X}\right)$. Let $\lambda_i$ denote the $i^{th}$ eigenvalue of $\bm{\Sigma}_{\bm{X}}$. Therefore, \ref{eq:optimum_mat_full} implies that the optimal $\bm{Q}$ satisfies
\begin{align}
\label{eq:optimum_mat}
\quad \big(\bm{\Sigma}_{\bm{Y}}(\bm{Q})\big)_{ii}=\dfrac{1}{d}\left(\sum_{i=1}^{d}\lambda_{i}\right)=\bar{\lambda}, \quad i \in \llbracket 1,d \rrbracket,
\end{align}
which establishes variance-equalization.

Furthermore, using the eigen-decomposition of the covariance matrix: $\bm{\Sigma}_{\bm X} = \bm{U}\bm{\Lambda}\bm{U}^{\top}$, where $\bm{\Lambda} = \mathrm{diag}(\lambda_1, \lambda_2, \dots, \lambda_d)$ is a diagonal matrix of the eigenvalues of $\bm{\Sigma}_{\bm X}$, we observe that $\bm{\Sigma}_{\bm{Y}}(\bm{Q})=\bm{Q}^{\top} \bm{\Sigma}_{\bm X} \bm{Q}=\bm{V}^{\top} \bm{U}^{\top}\bm{\Sigma}_{\bm X}\bm{U}\bm{V}=\bm{V}^{\top}\bm{\Lambda}\bm{V}$. Therefore, we can write \eqref{eq:optimum_mat} as follows:
\begin{align}\label{eq:(desired_cond_on_V)}
\bm{\Sigma}_{\bm{Y}}(\bm{Q})_{ii}=\left(\bm{V}^{\top}\bm{\Lambda}\bm{V}\right)_{ii} = \sum_{k=1}^{d}\lambda_{k} v_{ki}^2 = \overline{\lambda}.
\end{align}
Since equation~\ref{eq:(desired_cond_on_V)} must hold for every choice of the eigenvalue spectrum, choosing $\bm e_k$ as a spectrum, where $\bm e_k$ is the $k^{th}$ canonical basis of $\mathbb{R}^d$, gives $v_{ki} = \pm \frac{1}{\sqrt{d}}$. Given the orthogonality of $\bm V$, we infer that $\bm V$ is a normalized Hadamard matrix. Conversely, \eqref{eq:(desired_cond_on_V)} holds for any normalized Hadamard matrix, which can be verified by substitution.
\end{proof}

From Theorem~\ref{prop:minimization}, we note that the proposed covariance maximization objective in equation~\ref{eq:final_optimization_problem} admits multiple solutions determined by different choices of the alignment component $\bm{V}$ in $\bm{Q} = \bm{U}\bm{V}$ (see Figure~\ref{fig:solution_space}). This is also evident by first observing that the number of degrees of freedom in describing a $d$-dimensional orthogonal matrix is $d(d-1)/2$; because, out of the $d^2$ entries, the unit length of the column vectors imposes $d$ constraints, and orthogonality of these column vectors imposes an additional $d(d-1)/2$ constraints, so $d^2-d-d(d-1)/2=d(d-1)/2$. Moreover, in view of Theorem~\ref{prop:minimization}, the orthogonal matrix $\bm{Q}$ must satisfy $\bm{Q}^{\top} \bm{\Sigma}_{\bm X} \bm{Q} = \bar{\lambda}$. Since $\bm{Q}$ preserves the trace of the covariance matrix, fixing the first $d-1$ elements to $\bar{\lambda}$ automatically forces the $d$-th element to $\bar{\lambda}$. Therefore, the total number of degrees of freedom in choosing the desired $\bm{Q}$ is  $d(d-1)/2-(d-1)=(d-1)(d-2)/2$.

\subsubsection{Hadamard Solutions}
\label{sec:all_sols}
From Theorem~\ref{prop:minimization}, the class of Hadamard matrices\footnote{A Hadamard matrix is an orthogonal matrix whose entries are either $+1$ or $-1$. We emphasize that though the general Haar matrix of dimension $d\ge 2$ is also an orthogonal matrix with entries $+1$, $-1$, or $0$, it doesn't constitute a solution to our problem following Theorem~\ref{prop:minimization}. Nonetheless, a Haar matrix of dimension $d=2$ coincides with the Hadamard matrix.} constitutes the solution space for $\bm V$ in the orthogonal transform $\bm{Q = UV}$. However, the existence of Hadamard matrix of any dimension is an open problem \citep{paley1933orthogonal}. Several approaches have been proposed for constructing Hadamard matrices, including Sylvester's construction for dimensions that can be expressed as a power-of-$2$ \citep{sylvester1867lx}, construction by Kronecker products for certain dimensions that can be expressed as multiples of 4 \citep{seberry2020hadamard, hedayat1978hadamard}, etc. The class of butterfly matrices offers a more general framework for constructing Hadamard matrices that subsumes Sylvester's construction \citep{dao2019learning, trogdon2019spectral}, as depicted in Figure~\ref{fig:solution_space}. 
\begin{figure}
\centering
\resizebox{0.5\linewidth}{!}{%
\begin{tikzpicture}[font=\small, >=stealth]
  \definecolor{palDeep}{HTML}{522D60}
  \definecolor{palPlum}{HTML}{80426E}
  \definecolor{palPeach}{HTML}{F2A67B}
  \definecolor{palGold}{HTML}{FFD38B}
  \fill[palPeach!35!white] (3.6,0) ellipse (3.4 and 1.5);
  \fill[palPlum!18!white] (0,0) ellipse (2.2 and 1.8);
  \begin{scope}
    \clip (0,0) ellipse (2.2 and 1.8);
    \fill[palGold!70!white] (3.6,0) ellipse (3.4 and 1.5);
  \end{scope}
  \draw[thick, palPeach!80!palPlum] (3.6,0) ellipse (3.4 and 1.5);
  \draw[thick, palPlum] (0,0) ellipse (2.2 and 1.8);
  \draw[thick, palDeep, fill=palPlum!45!white] (1.3,-0.1) ellipse (0.70 and 0.60);
  \node[text=palDeep] at (1.3,-0.1) {\small Sylvester};
  \node[align=center, text=palDeep] at (-0.6,2.15)
    {\MakeUppercase{Hadamard matrices}};

  \node[align=center, text=palDeep] at (5.4,1.9)
    {\MakeUppercase{Butterfly matrices}, $d=2^\ell$\\[-1pt]
     \scriptsize (Arbitrary angles $\theta$)};

  \node[align=center, text=palPlum] at (-1.1,0) {$d\neq 2^\ell$};

  \node[align=center, text=palPlum] at (5.0,-0.1) {General $\theta$};

  \draw[->, thick, palDeep] (1.75,0.95) -- (2.4,2.55)
    node[above, align=center, text=palDeep]
    {$\theta\in\big\{\tfrac{\pi}{4},\tfrac{3\pi}{4},\tfrac{5\pi}{4},\tfrac{7\pi}{4}\big\}$\\
     $\mathcal O(d\log d)$ solutions};
\end{tikzpicture}
}
\caption{Structure of the solution space of the \emph{mixing} component ${\bm V}$ in our COAT $\bm{Q}=\bm{U}\bm{V}$. By Theorem~\ref{prop:minimization}, the data-independent mixing components are exactly the normalized Hadamard matrices. The Butterfly matrices with rotation angles chosen from $\{\pi/4, 3\pi/4, 5\pi/4, 7\pi/4\}$ are a subset of solutions; this subset admits
$\mathcal O(d\log d)$ operations and contains the class of Sylvester-Hadamard matrices.}
\label{fig:solution_space}
\end{figure}

Sylvester's construction starts with $\bm{H}_{1}=[1]$, 
$\bm{H}_{2}=\dfrac{1}{\sqrt{2}}
\begin{bmatrix}
   1 & \phantom{-}1\\
   1 & -1
\end{bmatrix}$ 
and $\bm{H}_{2^{\ell}} = \dfrac{1}{\sqrt{2}}\left(\bm{H}_{2} \otimes \bm{H}_{2^{\ell-1}}\right)$, where $\ell=2,3,\dots$ and $\otimes$ denotes the Kronecker product. Sylvester's solution belongs in the family of \emph{Butterfly} matrices obtained via the ``butterfly computations'' \citep{dao2019learning, trogdon2019spectral}. A butterfly computation is a kind of mixing operation that takes two inputs $x_i$ and $x_j$ and gives two outputs $y_i=x_i + \tau\,x_j$ and $y_j=x_i -  \tau\,x_j$, where $\tau$ is an appropriately chosen twiddle factor. The computation draws its name from the fact that, in the data-flow diagram for this pair of operations, lines cross and resemble the wings of a butterfly. A butterfly matrix $\bm{B}\in \mathbb R^{d\times d}$ can be expressed as a sparse hierarchical factorization:
\begin{align}\label{eq:butterfly_matrix}
 \bm{B}=\left(\prod_{i=1}^{\log_2 d}  {\bm B}_{i}^\top\right)^\top, 
\end{align}
where each ${\bm B}_i$ is a sparse orthogonal matrix consisting of $d/2$ independent Givens rotations $\bm{G}(\theta)$ defined as 
$\begin{bmatrix}
    \cos\theta & -\sin\theta\\
    \sin\theta & \phantom{-}\cos\theta
\end{bmatrix}$ or its reflection counterpart 
$\begin{bmatrix}
    \cos\theta & \phantom{-}\sin\theta\\
    \sin\theta & -\cos\theta
\end{bmatrix}.$
More explicitly, each ${\bm B}_{i}$ in the factorization \ref{eq:butterfly_matrix} is given as follows: 
\begin{align}\label{eq:butterfly_matrix_i}
 {\bm B}_i=\bm{P}_i\, {\bm {D}_i}\,\bm{P}_i^\top,\quad \text{with}\quad {\bm D}_i=\text{diag}\big(\bm{G}(\theta_{i_1}),\bm{G}(\theta_{i_2}),\dots, \bm{G}(\theta_{i_{d/2}})\big),
\end{align}
where $\bm{P}_i$ is a permutation matrix determining the butterfly connectivity pattern by pairing indices at stage $i=1,2,\dots,\log_2 d$. Crucially, the choice of index pairings across stages is not unique: while the standard dyadic stride pattern, that is, pairing indices separated by stride $2^{i-1}$ at stage $i$, represents a canonical choice, any sequence of stage permutations 
$\{\bm{P}_i\}_{i=1}^{\log_2 d}$ whose pairing rules satisfy the unique-path property, that is, the active support of every basis vector doubles at each stage without path collisions, yields a valid butterfly matrix. Note that, while the block-diagonal matrix comprising $\text{diag}\big(\bm{G}(\theta_{i_1}),\bm{G}(\theta_{i_2}),\dots, \bm{G}(\theta_{i_{d/2}})\big)$ can only apply 2D rotations to adjacent indices/channels like $(1,2), (3,4), \dots,(d-3, d-2), (d-1,d)$, the permutation matrices $\bm{P}_i$ and $\bm{P}_i^\top$ play a structural role. The matrix ${\bm P}_i$ does reshuffling so that pairs separated by a stride pattern at stage $i$ are moved into adjacent positions $(2m-1, 2m)$, $m=1,2,3,\dots,d/2$, to be amenable to the action of the block-diagonal matrix of Givens rotations ${\bm D_i}$. 

It can be shown that by choosing the angle in each of the rotation blocks as $\theta\in \{\pi/4, 3\pi/4, 5\pi/4, 7\pi/4\}$, any butterfly matrix from \eqref{eq:butterfly_matrix} satisfies \eqref{eq:(desired_cond_on_V)}. This is because, any fixed evaluation at $\pi/4,\, 3\pi/4,\, 5\pi/4$ or  $7\pi/4$ yields a butterfly matrix with entries $\pm \frac{1}{(\sqrt{2})^{\log_2d}}$. For $d=2^\ell$, the $(k,i)$-th element of the resulting butterfly matrix is $b_{ki}:= \pm\tfrac{1}{(\sqrt{2})^{\log_2d}} = \pm\tfrac{1}{(\sqrt{2})^{\ell}}=\pm\tfrac{1}{\sqrt{d}}$. Accordingly, any such butterfly matrix satisfies the condition:
\begin{align}\label{eq:butterfly_choice}
\big(\bm{\Sigma}_{\bm{Y}}(\bm{Q})\big)_{ii}=\sum_{k=1}^{d}\lambda_{k} b_{ki}^2= \sum_{k=1}^{d}\lambda_{k} \left(\dfrac{1}{\sqrt{d}}\right)^2=\sum_{k=1}^{d}\dfrac{\lambda_{k}}{d}=\overline{\lambda}.
\end{align}
The above description demonstrates that a butterfly matrix constructed as per \eqref{eq:butterfly_matrix} is a solution for the covariance maximization objective considered in \ref{eq:final_optimization_problem}, whenever the dimension $d$ is a power of $2$, which is usually the case with LLMs. Therefore, in this case, the desired covariance-maximizing orthogonal transform leads to variance equalization is given by $\bm{Q}:=\bm{U}\bm{V}=\bm{U}\bm{B}$. 

\begin{remark}[Non-power-of-$2$ case]
As the Kronecker product of any two Hadamard matrices is again a Hadamard matrix, it is possible to construct more general Hadamard matrices. For instance, if $\bm{H}_{m}$ is a Hadamard matrix of some known dimension $m$ \citep{moregenralHadamard}, then we can construct another Hadamard matrix of dimension $d = 2^\ell m$ via the Kronecker product as $\bm{H}_d = \bm{H}_{2^\ell} \otimes \bm{H}_m$. Analogously, if $\bm{B}_{2^\ell}$ is a butterfly matrix of order $2^\ell$ with rotation angles chosen from $\{\pi/4, 3\pi/4, 5\pi/4, 7\pi/4\}$, then
$\bm{B}_d = \bm{B}_{2^\ell} \otimes \bm{H}_m$ is a (mixed-radix) butterfly matrix of dimension $d = 2^\ell m$, which is again a Hadamard matrix. In both cases, we can apply the matrix in $\mathcal O(d \ell + d\,m)$ operations without forming the Kronecker product.
\end{remark}

\subsection{Optimal Transform Under Shared-Scale Quantization}
\label{sec:right_objective}
It can be shown that among all orthogonal transforms, the total quantization error under shared-scale quantization is minimized if and only if the transformation performs \textit{variance equalization}. In the regime of shared-scale quantization, a single scale is applied per quantization group, which is often the case with modern quantization approaches. Each quantization group is assigned a single dynamic range $A = c\sigma_{max}(\bm{Q})$, where $\sigma_{max}^2(\bm{Q})= \displaystyle\max_{1\le i\le d}(\Sigma_{\bm Y}(\bm{Q}))_{ii}$ with $c$ is a constant determined by the desired coverage probability. The constant $c$ is independent of both the transformation $\bm{Q}$ and the coordinate $i=1,2,\dots, d$. For instance, as used for the $99\%$ coverage rule, this constant is specified as $c=3$. Considering the $b$-bit scalar uniform quantizer $q(\cdot)$ with step-size $\Delta = \frac{2A}{2^b}$, the quantization noise variance is $\sigma_q^2 = \Delta^2/12$ under the standard high-rate approximation. Since the scale is shared across all the coordinates, every coordinate incurs identical quantization noise variance:
\begin{align}
\sigma_{q_i}^2=\dfrac{\Delta^2}{12}&=\dfrac{1}{12}\left(\dfrac{4c^2\sigma_{max}^2(\bm{Q})}{2^{2b}}\right)=\dfrac{c^2}{3}\,2^{-2b}\,\sigma_{max}^2(\bm{Q})=c_0\,\sigma_{max}^2(\bm{Q})\,2^{-2b},\quad i=\llbracket 1, d\rrbracket,
\end{align}
where $c_0 = c^2/3$. The total quantization noise variance is given by 
$\boldsymbol{\varepsilon}(\bm{Q})= \displaystyle\sum_{i=1}^{d}\sigma_{q_i}^2=d\,c_0\,\sigma_{max}^2(\bm{Q})\,2^{-2b}.$
Observe that
\begin{align}
\label{eq:varianceequalization}
\sigma_{max}^2(\bm{Q})&= \displaystyle\max_{1\le i\le d}(\Sigma_{\bm Y}(\bm{Q}))_{ii}\ge \dfrac{\sum_{i=1}^{d}(\Sigma_Y(\bm{Q}))_{ii}}{d}=\dfrac{\bm{Tr}\left(\bm{\Sigma}_{\bm{Y}}(\bm{Q})\right)}{d}=\dfrac{\bm{Tr}\left(\bm{\Sigma}\right)}{d}=\overline{\lambda}.    
\end{align}
Therefore, it follows that $\boldsymbol{\varepsilon}(\bm{Q})\ge d\,c_0\,\overline{\lambda}\,2^{-2b}$ with equality if and only if the diagonal elements of $\bm{\Sigma}_{\bm{Y}}(\bm{Q})$ are all equal: $\big(\bm{\Sigma}_{\bm{Y}}(\bm{Q})\big)_{11}=\big(\bm{\Sigma}_{\bm{Y}}(\bm{Q})\big)_{22}=\big(\bm{\Sigma}_{\bm{Y}}(\bm{Q})\big)_{33}=\dots=\big(\bm{\Sigma}_{\bm{Y}}(\bm{Q})\big)_{dd}$. Hence, we conclude that the total quantization error $\boldsymbol{\varepsilon}(\bm{Q})$ is minimized only under a variance equalizing transformation. 

To analyze the quantization distortion under an orthogonal transformation, we consider synthetic multivariate Gaussian distributed data. We construct a covariance matrix ${\bm \Sigma}_{\bm X} = {\bm U} {\bm \Lambda} {\bm U}^\top \in \mathbb{R}^{d \times d}$, where $d=4096$, ${\bm U}$ is a random orthogonal matrix, and $\Lambda$ is a diagonal matrix with exponentially decaying eigenvalues. Ten thousand samples are drawn from ${\bm X} \sim \mathcal{N}({\bm 0},{\bm \Sigma}_{\bm X})$. We compare three quantization pipelines: quantization in the original basis; quantization after a Hadamard transform; and quantization after our proposed covariance-aware transform, viz., COAT (${\bm Q}={\bm U}{\bm H}$). For each bit-width, we uniformly quantize the transformed samples, project back to the original space and report the MSE. 

In the transformed case, the range can be chosen according to variance-based scaling: $A = 3\sqrt{\max_i \widehat{\text{Var}}(y_i)}$, where $y_i$ denotes the $i$-th component of the transformed vector ${\bm y}$. The dynamic range can also be chosen according to AbsMax scaling:  $A=\|\bm{y}\|_\infty$. The MSE versus number of bits is shown in Figure~\ref{fig:mse_distortion_scales}. The transform that minimizes the worst-case coordinate variance also corresponds to a smaller quantization step, which explains why SpectrumQuant achieves a lower distortion.

\begin{figure}[t]
  \centering
  \includegraphics[width=0.9\textwidth]{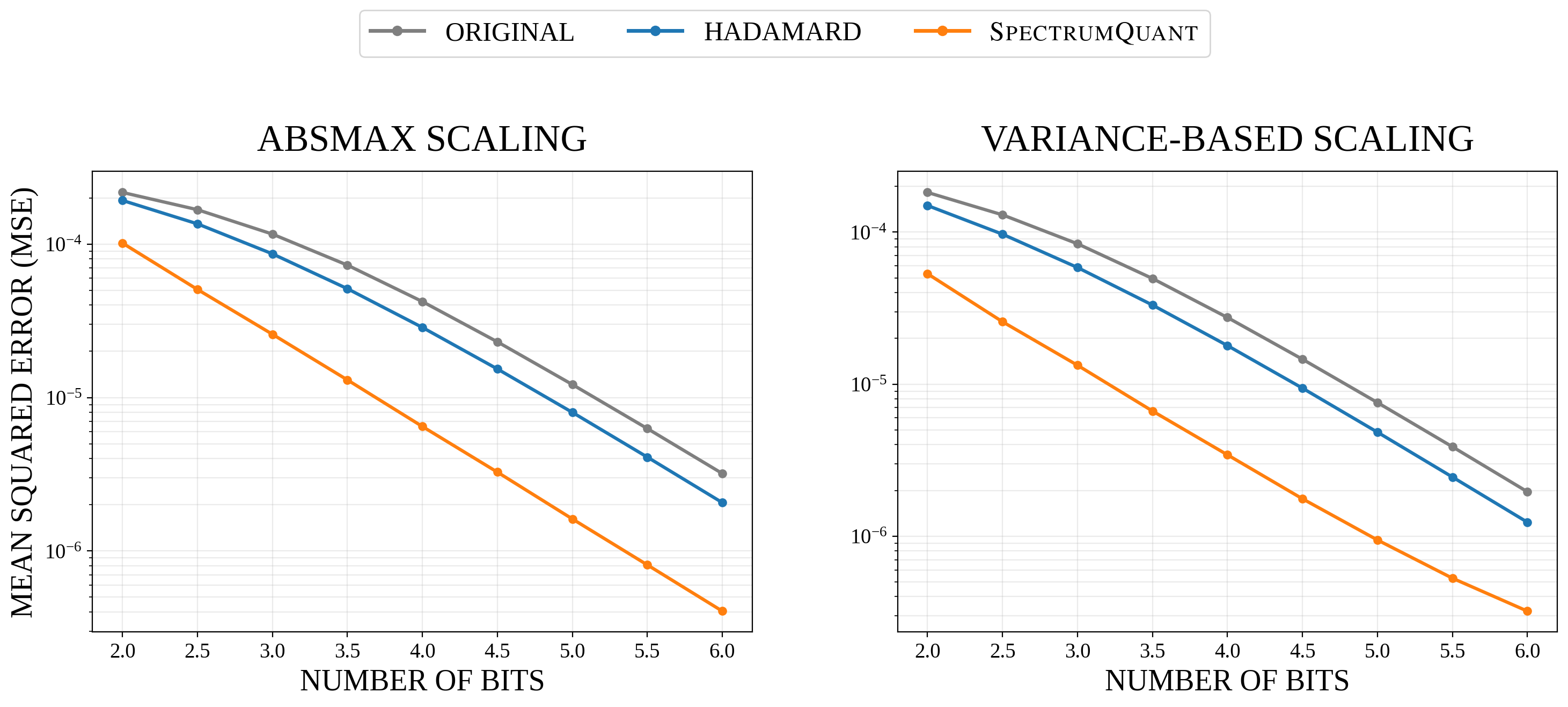}
    \caption{MSE as a function of the number of bits for a synthetic multivariate Gaussian experiment. The left panel uses AbsMax scaling $A=\|\bm{y}\|_\infty$, and the right panel uses variance-based scaling $A = 3\sqrt{\max_i \widehat{\mathrm{Var}}(y_i)}$.}
  \label{fig:mse_distortion_scales}
\end{figure}

\section{Experiments}
\label{sec:rotational-invariance}
The Transformer architecture, common in LLMs, is composed of multi-head self-attention and feed-forward network modules both composed of linear layers. Following the standard notation, denote the output of a layer as $\bm Y := \bm{XW}$, with $\bm X \in \mathbb{R}^{d_{in} \times d}$ as the input activation, and $\bm W \in \mathbb{R}^{ d \times d_{out}}$ as the weight matrix. Based on rotational invariance, an orthogonal transformation $\bm{R} \in \mathbb{R}^{d \times d}$  is inserted into the linear layer without altering the output, yielding $\bm{Y} = \bm{X R R^{\top} W}= \bm{(XR)(R^{\top}W)}$. We can commute $\bm R$ with the RMSNorm operation, because an orthogonal matrix $\bm R$ satisfies: $RMSNorm(\bm {xR}) = RMSNorm(\bm x)\bm R$, where $\bm x$ is a row vector of $\bm X$. Therefore, we can combine $\bm{R}$ with the previous weight matrix and $\bm{R}^{\top}$ with current layer's weight matrix, and subject the activations to a rotation without incurring additional inference costs.

Without altering the output of the model, a total of \emph{four orthogonal matrices}, \textit{viz.} $\bm{R}_1$, $\bm{R}_2$, $\bm{R}_3$, $\bm{R}_4$ can be inserted within the Transformer block in Fig.~\ref{fig:rotational_invariance} (adapted from  \citep{liu2025spinquantllmquantizationlearned}). Multiplying $\bm{R}_1^{\top}$ on the left side of $\bm W_q$, $\bm W_k$, $\bm W_v$, $\bm W_{up}$, $\bm W_{gate}$, and $\bm{R}_1$ on the right side of $\bm W_{o}$ and $\bm W_{down}$, an equivalent transformation is achieved. The rotation $\bm{R}_2$ can be inserted between $\bm W_v$ and $\bm W_{o}$, whereas $\bm{R}_3$ can be inserted between the rotated encodings of \textit{Query} and \textit{Key}, and $\bm{R}_4$ can be inserted before $\bm W_{down}$. Finally, $\bm W_{emb}$ is multiplied on the right by $\bm{R}_1$, and $\bm{R}_1^{\top}$  is multiplied on the left of $\bm W_{head}$, completing all equivalent transformations. This inference framework follows the  rotational transformation framework used in the baselines, including SpinQuant \citep{liu2025spinquantllmquantizationlearned} and DartQuant \citep{shao2026dartquant}.

\begin{figure*}[t]
  \centering
  \resizebox{\textwidth}{!}{
\begin{tikzpicture}[
    x=\dpx, y=-\dpx,
    font=\rmfamily,
    >={Stealth[length=9pt,width=8pt]},
    fW/.style={font=\fontsize{20}{24}\selectfont},                    
    fL/.style={font=\rmfamily\fontsize{16}{19}\selectfont},  
    fT/.style={font=\rmfamily\fontsize{28}{32}\selectfont},  
    fQ/.style={font=\rmfamily\fontsize{12}{14}\selectfont},  
    weight/.style={draw=wblue, line width=1.2pt, minimum height=75*\dpx,
                   inner sep=4pt, fW},
    quant/.style={draw, line width=1pt, minimum width=38*\dpx,
                  minimum height=130*\dpx, inner sep=0pt, fQ},
    sqbox/.style={draw, line width=1pt, inner sep=0pt},
    normbox/.style={sqbox, minimum size=56*\dpx,
                    font=\fontsize{15}{17}\selectfont},
    rot/.style={draw, line width=1.5pt, minimum size=46*\dpx, inner sep=0pt,
                font=\fontsize{18}{20}\selectfont},
    blockframe/.style={draw, dashed, line width=1.5pt, rounded corners=20pt},
    normframe/.style={draw, dashed, line width=1pt, rounded corners=8pt},
    main/.style={->, line width=1.5pt},
    qa/.style={->, line width=1.5pt, draw=ablue},
    thin/.style={->, line width=1pt},
    kvdash/.style={->, line width=1pt, dashed},
  ]

\node[weight, minimum width=150*\dpx] (emb) at (190,340)
  {$\mathbf{W}_{\mathrm{emb}}\mathbf{R}_1$};
\node[fL] at (190,280) {EMBEDDING};
\node[align=center, font=\rmfamily\fontsize{16}{19}\selectfont]
  (tok) at (190,450) {INPUT TOKENS};
\draw[main] (tok) -- (emb);

\node[font=\fontsize{22}{26}\selectfont] (XR) at (330,340) {$\mathbf{X}\mathbf{R}_1$};
\draw[main] (emb) -- (XR);

\draw[normframe] (390,265) rectangle (512,415);
\node[fL] at (451,242) {RMSNorm};
\node[normbox] (n1) at (424,340) {$\dfrac{\mathbf{x}}{\|\mathbf{x}\|}$};
\node[quant] (q1) at (485,340) {\rotatebox{90}{QUANTIZE}};
\draw[main] (XR) -- (n1);
\draw[thin, >={Stealth[length=6pt,width=5pt]}] (n1) -- (q1);

\draw[blockframe] (525,160) rectangle (1365,520);
\node[fT] at (935,122) {ATTENTION};

\node[weight, minimum width=150*\dpx] (wq) at (635,225)
  {$\mathbf{R}_1^{\top}\mathbf{W}_q$};
\node[weight, minimum width=150*\dpx] (wk) at (635,340)
  {$\mathbf{R}_1^{\top}\mathbf{W}_k$};
\node[weight, minimum width=170*\dpx] (wv) at (645,455)
  {$\mathbf{R}_1^{\top}\mathbf{W}_v\mathbf{R}_2$};

\node[sqbox, minimum size=80*\dpx,
      font=\rmfamily\fontsize{14}{16}\selectfont]
  (rope) at (780,225) {RoPE};
\node[rot] (r3a) at (848,238) {$\mathbf{R}_3$};
\node[rot] (r3b) at (800,297) {$\mathbf{R}_3$};

\node[sqbox, minimum width=140*\dpx, minimum height=62*\dpx,
      font=\rmfamily\fontsize{22}{26}\selectfont]
  (mhsa) at (975,238) {MHSA};
\node[draw=wblue, line width=1.2pt, minimum width=120*\dpx,
      minimum height=130*\dpx, align=center, inner sep=0pt,
      font=\rmfamily\fontsize{22}{26}\selectfont]
  (kv) at (1006,420) {KV\\Cache};
\node[quant] (qkv) at (924,420) {\rotatebox{90}{QUANTIZE}};
\node[sqbox, minimum width=140*\dpx, minimum height=32*\dpx,
      font=\rmfamily\fontsize{11}{13}\selectfont]
  (deq) at (1006,334) {DEQUANTIZE};
\node[quant] (q2) at (1138,340) {\rotatebox{90}{QUANTIZE}};
\node[weight, minimum width=170*\dpx] (wo) at (1265,340)
  {$\mathbf{R}_2^{\top}\mathbf{W}_o\mathbf{R}_1$};

\draw[qa] (q1) -- (wk);
\draw[qa, <->] (wv.west) -- (540,455) -- (540,225) -- (wq.west);

\draw[main] (wq) -- (rope);
\draw[main] (wk.east) -- (760,340) -- (760,265);
\draw[main] (r3a) -- (r3a -| mhsa.west);
\draw[main] (wv.east |- 0,437) -- (885,437) -- (885,240);

\draw[kvdash] (r3b.south) -- (800,395) -- (qkv.west |- 0,395);
\draw[kvdash] (wv.east |- 0,465) -- (qkv.west |- 0,465);
\draw[kvdash] (deq.north) -- (deq.north |- mhsa.south);

\draw[main] (mhsa.east) -- (1095,238) -- (1095,340) -- (q2.west);
\draw[qa]   (q2) -- (wo);

\draw[normframe] (1400,265) rectangle (1522,415);
\node[fL] at (1461,242) {RMSNorm};
\node[normbox] (n2) at (1434,340) {$\dfrac{\mathbf{x}}{\|\mathbf{x}\|}$};
\node[quant] (q3) at (1495,340) {\rotatebox{90}{QUANTIZE}};
\draw[main] (wo) -- (n2);
\draw[thin, >={Stealth[length=6pt,width=5pt]}] (n2) -- (q3);

\draw[thin] (380,340) -- (380,540) -- (1380,540) -- (1380,340);

\draw[blockframe] (1535,160) rectangle (2180,520);
\node[fT] at (1857,122) {FEED-FORWARD NETWORK};

\node[weight, minimum width=155*\dpx] (wg) at (1647,238)
  {$\mathbf{R}_1^{\top}\mathbf{W}_{\mathrm{gate}}$};
\node[weight, minimum width=155*\dpx] (wu) at (1647,441)
  {$\mathbf{R}_1^{\top}\mathbf{W}_{\mathrm{up}}$};
\node[sqbox, minimum size=56*\dpx, font=\fontsize{24}{28}\selectfont]
  (sig) at (1767,238) {$\sigma$};
\node[draw, circle, line width=1.2pt, minimum size=36*\dpx, inner sep=0pt]
  (mul) at (1805,340) {};
\draw[line width=1.2pt] (mul.north east) -- (mul.south west)
                        (mul.north west) -- (mul.south east);
\node[rot] (r4) at (1860,340) {$\mathbf{R}_4$};
\node[quant] (q4) at (1915,340) {\rotatebox{90}{QUANTIZE}};
\node[weight, minimum width=205*\dpx] (wd) at (2060,340)
  {$\mathbf{R}_4^{\top}\mathbf{W}_{\mathrm{down}}\mathbf{R}_1$};

\draw[qa] (q3.east) -- (1550,340) -- (1550,238) -- (wg.west);
\draw[qa] (1550,340) -- (1550,441) -- (wu.west);
\draw[main] (wg) -- (sig);
\draw[main] (sig.east) -- (1805,238) -- (mul.north);
\draw[main] (wu.east) -- (1805,441) -- (mul.south);
\draw[thin, >={Stealth[length=6pt,width=5pt]}] (mul) -- (r4);
\draw[thin, >={Stealth[length=6pt,width=5pt]}] (r4) -- (q4);
\draw[thin, draw=ablue] (q4) -- (wd);

\node[font=\fontsize{22}{26}\selectfont] (YR) at (2272,340) {$\mathbf{Y}\mathbf{R}_1$};
\draw[main] (wd) -- (YR);

\draw[thin] (1390,340) -- (1390,540) -- (2200,540) -- (2200,340);

\draw[line width=1.2pt] (380,560) -- (380,575) -- (2200,575) -- (2200,560);
\node[font=\fontsize{24}{28}\selectfont, fill=white, inner sep=4pt]
  at (1290,575) {$\times\,N$};

\draw[normframe] (2342,295) rectangle (2418,385);
\node[fL, align=center] at (2380,250) {Final\\[-2pt]RMSNorm};
\node[normbox] (nf) at (2380,340) {$\dfrac{\mathbf{x}}{\|\mathbf{x}\|}$};
\node[weight, minimum width=160*\dpx] (head) at (2530,340)
  {$\mathbf{R}_1^{\top}\mathbf{W}_{\mathrm{head}}$};
\node[fL] at (2530,280) {LM HEAD};
\node[font=\rmfamily\fontsize{16}{19}\selectfont] (logits) at (2530,450) {LOGITS};

\draw[main] (YR) -- (nf);
\draw[main] (nf) -- (head);
\draw[main] (head) -- (logits);

\end{tikzpicture}}
  \caption{Placement of the rotations $\bm{R}_1$ to $\bm{R}_4$ in a Transformer block. Operations outlined in black are in FP16 and those outlined in blue are in INT4. The dashed lines indicate read and write operations to the KV cache.}
 \label{fig:rotational_invariance}
\end{figure*}

\subsection{Implementation Details}
Rotations ${\bm R}_1$ and ${\bm R}_2$ are calibrated using SpectrumQuant with the Sylvester-Hadamard matrix (${\bf H}$) and $45^\circ$ butterfly matrix ($\bm B$) as the mixing component (that is, ${\bm Q}={\bm {UH}} $ and $\bm{UB}$ as calibrated matrices). ${\bm R}_3$ and ${\bm R}_4$ are left online as Hadamard matrices, as in the case of the baselines. Since $\bm R_1$ acts as a single global rotation on the residual stream, $\bm U$ is calibrated once per model from activations pooled across all layers. Rotations ${\bm R}_1$ and ${\bm R}_2$ are computed offline from activation covariance statistics estimated on 128 WikiText2 calibration samples using the Welford algorithm \citep{Welford01081962, Chan1983AlgorithmsFC}, which provides a numerically stable unbiased online covariance estimator. The Welford algorithm is listed in Appendix~\ref{welford}. 

\subsection{Main Results}
We evaluate SpectrumQuant on Llama-family models, including Llama-2 (7B, 13B, and 70B)  \citep{touvron2023llama}, Llama-3 (8B) and Mistral-family models, Mistral 7B v0.3 \citep{jiang2023mistral}. We report \textit{perplexity} (PPL) on WikiText2 \citep{merity2016pointer}, C4 \citep{raffel2020t5}, and PTB \citep{marcus-etal-1993-building}, and further measure zero-shot performance on eight downstream tasks: WinoGrande \citep{winograde}, SIQA \citep{siqa}, PIQA \citep{piqa}, OpenBookQA \citep{openbookqa}, LAMBADA \citep{lambada}, HellaSwag \citep{zellers2019hellaswag}, ARC-E, and ARC-C \citep{arc}.

We compare our approach with existing rotation-based quantization approaches, including Hadamard rotations in QuaRot \citep{quarot}, end-to-end learned rotations in SpinQuant \citep{liu2025spinquantllmquantizationlearned} and DartQuant \citep{shao2026dartquant}. In our implementation, we quantize weights with GPTQ \citep{frantar2023gptqaccurateposttrainingquantization} under the standard setup using 128 WikiText2 calibration samples, each of length 2048, and quantize activations with per-token asymmetric quantization. 
\begin{wrapfigure}{r}{0.6\textwidth}
    \centering
    \captionsetup{type=table}
    \caption{Comparison of calibration cost. SpectrumQuant requires less than one-tenth the memory of SpinQuant and less than one-sixth the memory of DartQuant at 70B scale, while also being the fastest to calibrate across all model sizes.}
    \label{tab:calibration_cost_spinquant}

    \begin{tabular}{ccccc}
        \toprule
        Cost & Method & 7B & 13B & 70B \\
        \midrule
        \multirow{3}{*}{GPU hour (h)}
        & SpinQuant & 0.41 & 0.84 & 46.40 \\
        & DartQuant & 0.32 & 0.57 & 3.74 \\
        \rowcolor{tablegray}
        & SpectrumQuant & \textbf{0.26} & \textbf{0.35} & \textbf{2.25} \\
        \midrule
        \multirow{3}{*}{Memory (GiB)}
        & SpinQuant & 18.23 & 32.21 & 218.72 \\
        & DartQuant & 15.79 & 27.80 & 128.21 \\
        \rowcolor{tablegray}
        & SpectrumQuant & \textbf{4.51} & \textbf{7.05} & \textbf{22.42} \\
        \bottomrule
    \end{tabular}
\end{wrapfigure}

A comparison of rotation matrix computation time and memory cost of SpinQuant and SpectrumQuant on an Nvidia A6000 GPU server is shown in Table ~\ref{tab:calibration_cost_spinquant}. SpectrumQuant is a simple calibration approach, which has a significant reduction in compute cost across various models. For the 70B model, after pre-saving activations, SpectrumQuant completes calibration in 43 minutes on a single GPU and can also calibrate on CPU in 51 minutes, achieving a speedup of 21$\times$ in calibration and 9.8$\times$ in memory usage compared to SpinQuant. While DartQuant calibration time is relatively low, with only 10 epochs, calibration memory requirement is significantly higher. All competing baseline models are reproduced from their publicly available codebases.

\begin{table}[h]
  \caption{Comparison of average PPL across three datasets and average accuracy on eight
  zero-shot common-sense reasoning tasks (zero-shot$^8$), over 5 seeds, under W4-A4-KV4
  quantization. SpectrumQuant matches or outperforms SpinQuant on all five models and shows no
  statistically significant difference from DartQuant on four out of five models, without requiring iterative optimization. Detailed results are given in
  Appendix~\ref{app: main-results}.}
  \label{main-results-table}
  \centering
  \setlength{\tabcolsep}{3pt}
  \renewcommand{\arraystretch}{1.15}
  \newcommand{\s}[1]{{\scriptsize$\pm$#1}}
  \resizebox{\textwidth}{!}{%
  \begin{tabular}{l cccccccccc}
    \toprule
    \multirow{2}{*}{Method}
    & \multicolumn{2}{c}{Llama-2 7B} & \multicolumn{2}{c}{Llama-2 13B}
    & \multicolumn{2}{c}{Llama-3 8B} & \multicolumn{2}{c}{Mistral 7B}
    & \multicolumn{2}{c}{Llama-2 70B} \\
    \cmidrule(lr){2-3} \cmidrule(lr){4-5} \cmidrule(lr){6-7} \cmidrule(lr){8-9} \cmidrule(lr){10-11}
    & PPL$\downarrow$ & zero-shot$^8$$\uparrow$ & PPL$\downarrow$ & zero-shot$^8$$\uparrow$
    & PPL$\downarrow$ & zero-shot$^8$$\uparrow$ & PPL$\downarrow$ & zero-shot$^8$$\uparrow$
    & PPL$\downarrow$ & zero-shot$^8$$\uparrow$ \\
    \midrule
    FP16
    & 16.88 & 63.66 & 20.85 & 65.99 & 8.92 & 66.54 & 17.04 & 67.74 & 11.09 & 70.07 \\
    \midrule
    QuaRot
    & 25.57\s{1.22} & 59.73\s{0.36} & 24.64\s{0.45} & 63.34\s{0.38}
    & 12.10\s{0.13} & 59.70\s{0.61} & 20.11\s{0.42} & 64.37\s{0.21}
    & \textbf{11.68}\s{0.14} & \textbf{68.66}\s{0.36} \\
    SpinQuant
    & 22.92\s{0.67} & 59.23\s{0.54} & 24.93\s{0.88} & 63.19\s{0.50}
    & 11.67\s{0.09} & 61.36\s{0.44} & 19.50\s{0.26} & \textbf{65.18}\s{0.18}
    & 12.26\s{0.14} & \textbf{68.58}\s{0.50} \\
    DartQuant
    & \textbf{20.46}\s{0.20} & \textbf{60.99}\s{0.43} & \textbf{24.47}\s{0.60} & \textbf{64.30}\s{0.14}
    & \textbf{10.95}\s{0.05} & \textbf{62.74}\s{0.35} & \textbf{18.73}\s{0.30} & \textbf{65.34}\s{0.28}
    & 12.00\s{0.08} & \textbf{68.87}\s{0.42} \\
    \rowcolor{tablegray} SpectrumQuant ($\bm{Q}=\bm{UH}$)
    & 21.50\s{0.46} & 60.06\s{0.32} & \textbf{24.53}\s{0.56} & \textbf{63.71}\s{0.58}
    & \textbf{10.99}\s{0.12} & \textbf{62.57}\s{0.50} & \textbf{18.77}\s{0.17} & \textbf{65.45}\s{0.34}
    & 12.00\s{0.11} & \textbf{68.94}\s{0.54} \\
    \rowcolor{tablegray} SpectrumQuant ($\bm{Q}=\bm{UB}$)
    & \textbf{21.44}\s{1.03} & 60.29\s{0.26} & \textbf{23.89}\s{0.54} & \textbf{64.12}\s{0.65}
    & \textbf{10.95}\s{0.08} & \textbf{62.84}\s{0.20} & \textbf{18.98}\s{0.18} & \textbf{65.34}\s{0.35}
    & \textbf{11.74}\s{0.11} & \textbf{68.62}\s{0.37} \\
    \bottomrule
  \end{tabular}%
  }
\end{table}

Table~\ref{main-results-table} compares our 4-bit weight, activation and KV cache approach against FloatingPoint, QuaRot, SpinQuant and DartQuant on three WikiText-style PPL evaluations and the average accuracy over eight zero-shot common-sense reasoning tasks (zero-shot$^8$). Across the LLM architectures, our approach matches or outperforms SpinQuant on all five models, with the clearest gains on Llama-2 7B and Llama-3 8B and on par with DartQuant on four out of five models, with DartQuant ahead on Llama-2 7B. In our SpectrumQuant implementation, we have demonstrated that the orthogonal transformations using the mixing components that are more general than the structured Sylvester-Hadamard matrices, as described in Section~\ref{sec:all_sols}, perform almost equally. This further strengthens the theoretical contribution of the present work. In fact, our theoretical results also explain why rotation methods involving only butterfly or only Hadamard matrices are effective in LLM post-training quantization.

Compared to QuaRot, SpectrumQuant reduces PPL by 16.2\% on Llama-2 7B, 3\% on Llama-2 13B, 9.5\% on Llama-3 8B and 5.6\% on Mistral-v0.3 7B on average. Our method improves zero-shot accuracy by 0.56 percentage points on Llama-2 7B, 0.78 percentage points on Llama-2 13B, 3.14 percentage points on Llama-3 8B, 1.08 percentage points on Mistral-v0.3 7B, and 0.28 percentage points on Llama-2 70B on average. Additionally, in Section \ref{sec:ablation-Hadamard}, we have shown that our closed-form COAT is consistently better than the Hadamard rotation in terms of the mean-squared error (MSE) due to quantization.

\subsection{Ablation Studies}
\label{sec:ablation}
To better understand the individual contributions of each component of COAT in our proposed method SpectrumQuant, we conducted some ablation studies as described below.

\subsubsection{Calibration with different datasets}
To study sensitivity of the calibration-data, we adopt the SpectrumQuant approach using WikiText2, PTB, and C4 samples and evaluate the resulting quantized models. As shown in Table~\ref{calibration-dataset}, the results are broadly consistent across calibration datasets for both Llama-2 7B and Mistral 7B v0.3, suggesting that the proposed approach is robust to the choice of calibration data.

\begin{table}[t]
\centering
\caption{Comparison of PPL of SpectrumQuant (with ${\bm Q}=\bm{UH}$) across various calibration datasets for 5 seeds. SpectrumQuant shows no statistically significant difference across calibration datasets for Mistral 7B v0.3, and remains competitive across calibration datasets for Llama-2 7B, suggesting the approach is generally robust to calibration data selection.} \label{calibration-dataset}
\begin{tabular}{cccccc} 
\toprule
Model & Dataset &  WikiText2 & PTB & C4  \\
\midrule
\multirow{3}{*}{Llama-2 7B}
 & WikiText2 & $6.28_{\pm 0.06}$  & $49.86_{\pm 1.31}$ & $8.37_{\pm 0.06}$   \\
 & PTB       & $6.23_{\pm 0.029}$ & $44.95_{\pm 1.42}$ & $8.392_{\pm 0.066}$ \\
 & C4        & $6.28_{\pm 0.041}$ & $46.90_{\pm 0.65}$ & $8.438_{\pm 0.060}$ \\
\midrule
\multirow{3}{*}{Mistral 7B v0.3}
 & WikiText2 & $5.67_{\pm 0.01}$   & $41.64_{\pm 0.50}$ & $9.00_{\pm 0.00}$   \\
 & PTB       & $5.74_{\pm 0.014}$ & $40.51_{\pm 0.63}$ & $9.016_{\pm 0.027}$ \\
 & C4        & $5.70_{\pm 0.010}$ & $42.56_{\pm 1.21}$ & $8.983_{\pm 0.015}$ \\
\bottomrule
\end{tabular}
\end{table}

\subsubsection{Calibration with different sample sizes}
To study sensitivity to the sample size of the calibration data, we tested different numbers of samples from the WikiText2 Dataset for the Llama-2 7B model and analyzed the PPL scores. As shown in Table~\ref{tab:sample-size-sensitivity}, the results are broadly consistent across the choice of number of calibration samples, suggesting that SpectrumQuant is robust to sample size. By default, in the rest of the paper, we have used the sample size of 128, consistent with other calibrated approaches like SpinQuant and DartQuant.
\begin{table}[H]
\centering
\caption{PPL comparison of SpectrumQuant (with ${\bm Q}=\bm{UH}$) across sample sizes for Llama-2 7B. SpectrumQuant shows no statistically significant difference in performance from 64 to 256 calibration samples, suggesting that the dominant covariance structure is reliably captured even at small sample counts.}
\label{tab:sample-size-sensitivity}
\begin{tabular}{cccc c}
\toprule
No. of samples &  WikiText2 & PTB & C4 & Avg. PPL$\downarrow$ \\
\midrule
64  & 6.169$_{\pm0.033}$ & 49.07$_{\pm2.24}$ & 8.393$_{\pm0.065}$ & 21.21$_{\pm0.76}$ \\
128 & 6.28$_{\pm0.06}$   & 49.86$_{\pm1.31}$ & 8.37$_{\pm0.06}$   & 21.50$_{\pm0.46}$ \\
256 & 6.188$_{\pm0.038}$ & 50.22$_{\pm2.12}$ & 8.429$_{\pm0.023}$ & 21.61$_{\pm0.72}$ \\
\bottomrule
\end{tabular}
\end{table}

\subsubsection{Comparison with the Hadamard transform}
\label{sec:ablation-Hadamard}
The quantization MSE across different latent dimensions of the LLM architecture is shown in Figure \ref{fig:layerwise_quant_loss_main} corresponding to layers 10 and 25 of the Llama-2 7B model. We observe that the projections ($q/k/v/o, gate/up$) in the case of SpectrumQuant have lower MSE than those in the Hadamard-rotation-based approach. For $down$ projection, both competing approaches use the same $\bm R_4$ matrix. On the average, we have 2\% less MSE than the Hadamard-rotation-based approach across almost all layers.  Appendix~\ref{appendix:layerwise-quant-error} shows the MSE for several layers. The improvement is observed in both intermediate and deeper layers, suggesting that SpectrumQuant remains stable across network depths and projection types. 
\begin{figure}[H]
  \centering
  \includegraphics[trim={0cm 0cm 0cm 0cm}, clip, width=\linewidth]{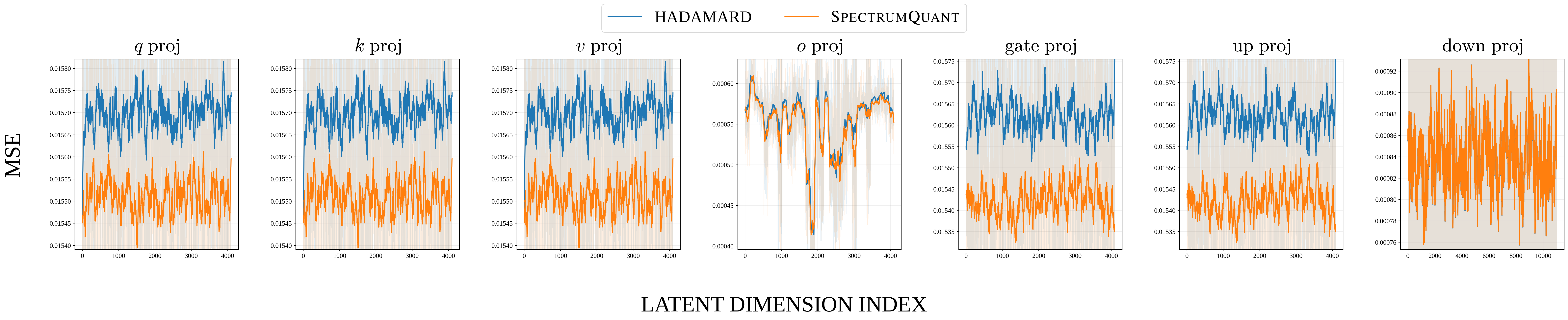}
  \caption{Comparison of sorted per-channel quantization MSE across hidden dimensions for different projection activations in Llama-2 7B at layers 10 and 25.}
  \label{fig:layerwise_quant_loss_main}
\end{figure}

\subsubsection{Butterfly as a replacement for the Sylvester-Hadamard matrix}
\label{sec:ablation-butterfly}
Throughout the existing literature, the Sylvester-Hadamard matrix has been considered the default choice for ${\bm R}_3, {\bm R}_4$. Following the analysis in Section~\ref{sec:all_sols}, we hypothesize that any $45^\circ$ butterfly matrix, as in \ref{eq:butterfly_matrix_i}, should also serve the purpose. A general butterfly transformation requires $\mathcal{O}(d\log d)$ operations, which is comparable to that of the Sylvester-Hadamard matrix.
Since the entries of the $45^\circ$ butterfly matrix or Sylvester-Hadamard matrix are $\pm 1$, the transformation requires only addition/subtraction operations,  significantly reducing the computational cost. To test the hypothesis, we replace ${\bm R}_3, {\bm R}_4$ in different LLM architectures with random butterfly matrices instead of random Sylvester-Hadamard matrices. Table~\ref{tab:butterfly-replacement} gives the PPL and zero-shot$^8$ for different models with $\bm{Q} = \bm{UB} $ as the calibrated ${\bm R}_1,{\bm R}_2$ and ${\bm B}$ in place of ${\bm R}_3$ and ${\bm R}_4$. Comparing the results in Table~\ref{main-results-table} with Table~\ref{tab:butterfly-replacement}, we observe that the performance of butterfly matrices is comparable to that of Sylvester-Hadamard matrices. These findings establish that the $45^\circ$ butterfly matrix can be used in place of the Sylvester-Hadamard matrix for $\bm{R_3}$ and $\bm{R_4}$ without compromising performance.

\begin{table}[t]
\centering
\caption{PPL and average accuracy for zero-shot common-sense reasoning task with calibrated $\bm R_1,\bm R_2$ as $\bm{UB}$ and ${\bm R_3}, {\bm R_4}$ as random butterfly matrices. Replacing the standard Hadamard matrices in ${\bm R_3}, {\bm R_4}$  with random $45^\circ$ butterfly matrices shows no statistically significant difference in results.}
\label{tab:butterfly-replacement}
\begin{tabular}{lcccc c}
\toprule
Model &  WikiText2  & PTB  & C4 & Avg. PPL $\downarrow$ & zero-shot$^8$ $\uparrow$ \\
\midrule
Llama-2 7B  & 6.21$_{\pm0.04}$ & 49.93$_{\pm1.00}$ & 8.49$_{\pm0.10}$  & 21.54$_{\pm0.38}$ & 60.26$_{\pm0.46}$ \\
Llama-3 8B  & 7.74$_{\pm0.06}$ & 13.34$_{\pm0.07}$ & 12.57$_{\pm0.09}$  & 11.22$_{\pm0.07}$ & 62.85$_{\pm0.74}$ \\
Mistral 7B v0.3 & 5.66$_{\pm0.02}$ & 43.03$_{\pm1.20}$ & 9.01$_{\pm0.01}$  & 19.23$_{\pm0.41}$ & 64.56$_{\pm0.30}$ \\
\bottomrule
\end{tabular}
\end{table}

\subsection{Inference Cost}
Due to commutativity of the RMSNorm operation and an orthogonal transform, rotations can be absorbed into the weights in the LLM architecture, essentially requiring the same inference cost as that of the no-transform baseline; that is, without applying rotation transforms prior to quantization. This is also demonstrated by the results shown in Table~\ref{tab:inference-cost}, obtained by running all the quantized models on a MacBook Air with the following specifications: M4 chip, 16 GB RAM, 10-core CPU, 10-core GPU, and 16-core neural engine.

\begin{table}[H]
\centering
\caption{Real-time inference cost of Llama-3 8B on MacBook Air M4. The rotation-based approaches including SpectrumQuant show no significant difference in inference throughput compared to the no-transform baseline (Quant), confirming that the rotation matrices are fully absorbed into the weights with no additional inference cost. Quant corresponds to quantization without applying a rotation transform.}
\label{tab:inference-cost}
\begin{tabular}{lcccc}
\toprule
 & Quant & SpinQuant & DartQuant & SpectrumQuant \\
\midrule
Decode Tokens/sec $\uparrow$ & 18.330 & 18.337 & 18.246 & 18.397 \\
Decode ms/Token $\downarrow$ & 54.6 & 54.5 & 54.8 & 54.4 \\
\bottomrule
\end{tabular}
\end{table}

\section{Conclusions}
We presented SpectrumQuant, a novel and data-aware framework for post-training quantization of large language models. SpectrumQuant relies on a COvariance-Aware orthogonal Transform (COAT) derived in \textit{closed-form} that estimates activation covariance from calibration data. COAT naturally emerges from a covariance-maximizing objective over the Stiefel manifold. Consequently, we bridged a gap in the LLM quantization literature where rotation selection for activation outlier mitigation is either data-agnostic, such as Hadamard rotations, or based on calibration-time optimization of learned rotations. Our analysis reveals a family of optimal transforms for faithful LLM quantization, and also justifies why the Hadamard and butterfly rotations employed in the recent literature on LLM quantization are effective. 
Empirical results across diverse LLM architectures showed consistent gains over data-agnostic approaches while being competitive with end-to-end learned baselines.

\bibliography{references}
\bibliographystyle{tmlr} 

\newpage
\appendix

\section*{Appendix}
\section{Welford Algorithm for Covariance Estimation}
The orthogonal matrices for SpectrumQuant are computed offline from activation covariance statistics estimated on the same 128 WikiText2 calibration samples using the Welford algorithm \citep{Welford01081962} \citep{Chan1983AlgorithmsFC} given below: 

\label{welford}
\begin{algorithm}[H]
\caption{Welford algorithm for covariance estimation from calibration activations}
\label{alg:welford_cov}
\begin{algorithmic}[1]
\Require Calibration samples $\{\bm x_t\}_{t=1}^{N}$, where $\bm x_t \in \mathbb{R}^{d}$
\Ensure Mean $\bm{\mu} \in \mathbb{R}^{d}$ and covariance estimate
$\bm{\Sigma} \in \mathbb{R}^{d\times d}$

\State $\bm{\mu} \gets \bm{0}$
\State $\bm{\Sigma} \gets \bm{0}_{d\times d}$

\For{$t = 1$ to $N$}
    \State $\bm{\delta} \gets \bm x_t - \bm{\mu}$
    \State $\bm{\mu} \gets \bm{\mu} + \bm{\delta}/t$
    \State $\bm{\delta}' \gets \bm x_t - \bm{\mu}$
    \State $\bm{\Sigma} \gets \bm{\Sigma} + \bm{\delta}{\bm{\delta}'}^\top$
\EndFor

\State $\bm{\Sigma} \gets \bm{\Sigma}/(N-1)$
\State \Return $\bm{\mu}, \bm{\Sigma}$
\end{algorithmic}
\end{algorithm}

\section{Detailed Results Pertaining to Table~\ref{main-results-table}}
\label{app: main-results}
We evaluate SpectrumQuant versus the baselines on Llama-family models, including Llama-2 (7B, 13B, and 70B)  \citep{touvron2023llama}, Llama-3 (8B) and Mistral-family models, Mistral 7B v0.3 \citep{jiang2023mistral}. 

This section reports perplexity on each of the three evaluation corpora and accuracy on each of the eight common-sense benchmarks, for both the Llama and Mistral family models. All the experiments suggest that SpectrumQuant shows statistically similar performance across different settings of the mixing component ${\bm V}$ in the underlying transformation ${\bm Q}={\bm U}{\bm V}$. This shows that the empirical outcomes are consistent with our theoretical claims about SpectrumQuant. Furthermore, it is observed that  SpectrumQuant outperforms both the baselines QuaRot and SpinQuant across different metrics. Additionally, SpectrumQuant outcomes show no significant difference from DartQuant, despite being constructed entirely without iterative optimization and at a fraction of the calibration memory cost. All the tables report the average PPL or zero-shot accuracy along with the standard deviation in the subscript.


\begin{table}[H]
\caption{Perplexity of each method on WikiText2, PTB and C4 for Llama-2 7B. The two SpectrumQuant variants (with ${\bm Q}=\bm{UH}$ and ${\bm Q}=\bm{UB}$) are statistically indistinguishable from each other on every dataset and lie between QuaRot and DartQuant, indicating that the choice of mixing component does not affect performance.}
\centering
\begin{tabular}{clcccc}
\toprule
\multicolumn{1}{c}{Bits} & \multicolumn{1}{c}{\multirow{2}{*}{Method}} & \multirow{2}{*}{Wiki} & \multirow{2}{*}{PTB} & \multirow{2}{*}{C4} & \multirow{2}{*}{Avg $\downarrow$} \\
\multicolumn{1}{c}{{(W-A-KV)}} & & & & & \\ \midrule
16 & Full Precision & 5.47 & 37.91 & 7.26 & 16.88 \\ \midrule
\multirow{4}{*}{4-4-4}
 & QuaRot & $6.21_{\pm 0.01}$ & $62.05_{\pm 3.68}$ & $8.45_{\pm 0.01}$ & $25.57_{\pm 1.22}$ \\
 & SpinQuant & $6.49_{\pm 0.04}$ & $53.50_{\pm 1.92}$ & $8.77_{\pm 0.10}$ & $22.92_{\pm 0.67}$ \\
 & DartQuant & $6.10_{\pm 0.02}$ & $47.17_{\pm 0.59}$ & $8.11_{\pm 0.02}$ & $20.46_{\pm 0.20}$ \\
 \rowcolor{tablegray} & SpectrumQuant ($\bm{Q}=\bm{UH}$) & $6.28_{\pm 0.06}$ & $49.86_{\pm 1.31}$ & $8.37_{\pm 0.06}$ & $21.50_{\pm 0.46}$ \\
 \rowcolor{tablegray} & SpectrumQuant ($\bm{Q}=\bm{UB}$) & $6.18_{\pm 0.07}$ & $49.69_{\pm 2.96}$ & $8.44_{\pm 0.07}$ & $21.44_{\pm 1.03}$ \\ \bottomrule
\end{tabular}
\end{table}

\begin{table}[H]
\centering
\caption{Accuracy on each of the eight common-sense benchmarks for Llama-2 7B, evaluated without in-context examples. In average accuracy, both SpectrumQuant variants (${\bm Q}=\bm{UH}$ and ${\bm Q}=\bm{UB}$) are comparable to DartQuant and ahead of SpinQuant.}
\resizebox{\textwidth}{!}{
\begin{tabular}{clccccccccc}
\toprule
\multicolumn{1}{c}{Bits} & \multicolumn{1}{c}{\multirow{2}{*}{Method}} & \multirow{2}{*}{WG} & \multirow{2}{*}{SIQA} & \multirow{2}{*}{PIQA} & \multirow{2}{*}{OBQA} & \multirow{2}{*}{LAMB} & \multirow{2}{*}{HS} & \multirow{2}{*}{ARC-E} & \multirow{2}{*}{ARC-C} & \multirow{2}{*}{Avg $\uparrow$} \\
\multicolumn{1}{c}{(W-A-KV)} & & & & & & & & & & \\ \midrule
16 & Full Precision & 69.06 & 46.16 & 79.05 & 44.20 & 73.90 & 76.02 & 74.54 & 46.33 & 63.66 \\ \midrule
\multirow{4}{*}{4-4-4}
 & QuaRot & $66.17_{\pm 0.28}$ & $44.66_{\pm 0.62}$ & $76.61_{\pm 0.44}$ & $39.12_{\pm 0.98}$ & $69.37_{\pm 0.60}$ & $72.61_{\pm 0.37}$ & $68.01_{\pm 1.17}$ & $41.30_{\pm 1.32}$ & $59.73_{\pm 0.36}$ \\
 & SpinQuant & $65.35_{\pm 1.27}$ & $43.64_{\pm 0.33}$ & $76.36_{\pm 0.71}$ & $39.52_{\pm 2.05}$ & $68.27_{\pm 0.85}$ & $71.45_{\pm 0.90}$ & $68.25_{\pm 1.05}$ & $40.99_{\pm 1.35}$ & $59.23_{\pm 0.54}$ \\
 & DartQuant & $67.44_{\pm 0.46}$ & $45.12_{\pm 0.74}$ & $77.20_{\pm 0.43}$ & $41.04_{\pm 1.69}$ & $71.17_{\pm 0.36}$ & $72.98_{\pm 0.25}$ & $70.09_{\pm 1.11}$ & $42.92_{\pm 0.99}$ & $60.99_{\pm 0.43}$ \\
 \rowcolor{tablegray} & SpectrumQuant ($\bm{Q}=\bm{UH}$) & $66.99_{\pm 0.71}$ & $44.11_{\pm 1.07}$ & $77.10_{\pm 0.64}$ & $39.24_{\pm 1.00}$ & $70.46_{\pm 0.60}$ & $72.10_{\pm 0.43}$ & $68.76_{\pm 1.34}$ & $41.67_{\pm 1.30}$ & $60.06_{\pm 0.32}$ \\
 \rowcolor{tablegray} & SpectrumQuant ($\bm{Q}=\bm{UB}$) & $66.53_{\pm 0.65}$ & $43.62_{\pm 0.62}$ & $77.13_{\pm 0.41}$ & $40.32_{\pm 0.27}$ & $70.24_{\pm 0.71}$ & $72.38_{\pm 0.25}$ & $69.64_{\pm 1.07}$ & $42.46_{\pm 0.79}$ & $60.29_{\pm 0.26}$ \\ \bottomrule
\end{tabular}
}
\end{table}


\begin{table}[H]
\caption{Perplexity of each method on WikiText2, PTB and C4 for Llama-2 13B. SpectrumQuant
(${\bm Q}=\bm{UB}$) attains the lowest average perplexity of all approaches, and both variants
(${\bm Q}=\bm{UH}$ and ${\bm Q}=\bm{UB}$) improve on QuaRot and SpinQuant.}
\centering
\begin{tabular}{clcccc}
\toprule
\multicolumn{1}{c}{{Bits}} & \multicolumn{1}{c}{\multirow{2}{*}{{Method}}} & \multirow{2}{*}{{Wiki}} & \multirow{2}{*}{{PTB}} & \multirow{2}{*}{{C4}} & \multirow{2}{*}{{Avg $\downarrow$}} \\
\multicolumn{1}{c}{(W-A-KV)} & & & & & \\ \midrule
16 & Full Precision & 4.88 & 50.94 & 6.73 & 20.85 \\ \midrule
\multirow{4}{*}{4-4-4}
 & QuaRot & $5.49_{\pm 0.01}$ & $60.79_{\pm 1.33}$ & $7.64_{\pm 0.01}$ & $24.64_{\pm 0.45}$ \\
 & SpinQuant & $5.61_{\pm 0.02}$ & $61.39_{\pm 2.62}$ & $7.78_{\pm 0.02}$ & $24.93_{\pm 0.88}$ \\
 & DartQuant & $5.40_{\pm 0.01}$ & $60.64_{\pm 1.79}$ & $7.37_{\pm 0.01}$ & $24.47_{\pm 0.60}$ \\
 \rowcolor{tablegray} & SpectrumQuant ($\bm{Q}=\bm{UH}$) & $5.54_{\pm 0.04}$ & $60.48_{\pm 1.70}$ & $7.58_{\pm 0.05}$ & $24.53_{\pm 0.56}$ \\
 \rowcolor{tablegray} & SpectrumQuant ($\bm{Q}=\bm{UB}$) & $5.44_{\pm 0.01}$ & $58.62_{\pm 1.63}$ & $7.60_{\pm 0.03}$ & $23.89_{\pm 0.54}$ \\ \bottomrule
\end{tabular}
\end{table}

\begin{table}[H]
\centering
\caption{Accuracy on each of the eight common-sense benchmarks for Llama-2 13B, evaluated without in-context examples. SpectrumQuant (with ${\bm Q}=\bm{UB}$) matches DartQuant in average accuracy within statistical error, without any iterative optimization and at a small fraction of its calibration memory.}
\resizebox{\textwidth}{!}{
\begin{tabular}{clccccccccc}
\toprule
\multicolumn{1}{c}{Bits} & \multicolumn{1}{c}{\multirow{2}{*}{Method}} & \multirow{2}{*}{WG} & \multirow{2}{*}{SIQA} & \multirow{2}{*}{PIQA} & \multirow{2}{*}{OBQA} & \multirow{2}{*}{LAMB} & \multirow{2}{*}{HS} & \multirow{2}{*}{ARC-E} & \multirow{2}{*}{ARC-C} & \multirow{2}{*}{Avg $\uparrow$} \\
\multicolumn{1}{c}{(W-A-KV)} & & & & & & & & & & \\ \midrule
16 & Full Precision & 72.14 & 47.39 & 80.52 & 45.20 & 76.73 & 79.38 & 77.44 & 49.15 & 65.99 \\ \midrule
\multirow{4}{*}{4-4-4}
 & QuaRot & $70.02_{\pm 0.42}$ & $45.77_{\pm 0.52}$ & $78.71_{\pm 0.90}$ & $43.92_{\pm 1.19}$ & $72.97_{\pm 0.60}$ & $76.29_{\pm 0.60}$ & $73.28_{\pm 0.95}$ & $45.74_{\pm 0.81}$ & $63.34_{\pm 0.38}$ \\
 & SpinQuant & $69.36_{\pm 0.93}$ & $45.65_{\pm 0.94}$ & $78.70_{\pm 0.68}$ & $43.92_{\pm 0.66}$ & $72.98_{\pm 1.25}$ & $76.05_{\pm 0.38}$ & $72.97_{\pm 1.24}$ & $45.90_{\pm 0.45}$ & $63.19_{\pm 0.50}$ \\
 & DartQuant & $70.40_{\pm 1.09}$ & $45.92_{\pm 0.21}$ & $79.05_{\pm 0.56}$ & $44.48_{\pm 1.58}$ & $75.28_{\pm 0.59}$ & $77.09_{\pm 0.24}$ & $74.66_{\pm 1.23}$ & $47.48_{\pm 1.20}$ & $64.30_{\pm 0.14}$ \\
 \rowcolor{tablegray} & SpectrumQuant ($\bm{Q}=\bm{UH}$) & $70.48_{\pm 1.07}$ & $45.05_{\pm 0.54}$ & $79.09_{\pm 0.46}$ & $43.64_{\pm 1.00}$ & $74.77_{\pm 1.32}$ & $76.46_{\pm 0.49}$ & $74.16_{\pm 1.45}$ & $46.01_{\pm 1.15}$ & $63.71_{\pm 0.58}$ \\
 \rowcolor{tablegray} & SpectrumQuant ($\bm{Q}=\bm{UB}$) & $71.00_{\pm 0.90}$ & $46.19_{\pm 0.67}$ & $79.43_{\pm 0.43}$ & $44.08_{\pm 1.81}$ & $74.68_{\pm 0.32}$ & $76.32_{\pm 0.42}$ & $74.37_{\pm 1.10}$ & $46.88_{\pm 1.32}$ & $64.12_{\pm 0.65}$ \\ \bottomrule
\end{tabular}
}
\end{table}


\begin{table}[H]
\caption{Perplexity of each method on WikiText2, PTB and C4 for Llama-3 8B. Both SpectrumQuant variants (with ${\bm Q}=\bm{UH}$ and ${\bm Q}=\bm{UB}$) match DartQuant in average perplexity within statistical error and improve on QuaRot and SpinQuant.}
\centering
\begin{tabular}{clcccc}
\toprule
\multicolumn{1}{c}{{Bits}} & \multicolumn{1}{c}{\multirow{2}{*}{{Method}}} & \multirow{2}{*}{{Wiki}} & \multirow{2}{*}{{PTB}} & \multirow{2}{*}{{C4}} & \multirow{2}{*}{{Avg $\downarrow$}} \\
\multicolumn{1}{c}{(W-A-KV)} & & & & & \\ \midrule
16 & Full Precision & 6.14 & 11.18 & 9.45 & 8.92 \\ \midrule
\multirow{4}{*}{4-4-4}
 & QuaRot & $8.29_{\pm 0.09}$ & $14.48_{\pm 0.21}$ & $13.54_{\pm 0.10}$ & $12.10_{\pm 0.13}$ \\
 & SpinQuant & $8.12_{\pm 0.10}$ & $13.69_{\pm 0.09}$ & $13.19_{\pm 0.14}$ & $11.67_{\pm 0.09}$ \\
 & DartQuant & $7.52_{\pm 0.03}$ & $13.17_{\pm 0.07}$ & $12.16_{\pm 0.06}$ & $10.95_{\pm 0.05}$ \\
 \rowcolor{tablegray} & SpectrumQuant ($\bm{Q}=\bm{UH}$) & $7.66_{\pm 0.06}$ & $12.96_{\pm 0.15}$ & $12.34_{\pm 0.14}$ & $10.99_{\pm 0.12}$ \\
 \rowcolor{tablegray} & SpectrumQuant ($\bm{Q}=\bm{UB}$) & $7.58_{\pm 0.04}$ & $12.98_{\pm 0.11}$ & $12.28_{\pm 0.11}$ & $10.95_{\pm 0.08}$ \\ \bottomrule
\end{tabular}
\end{table}

\begin{table}[H]
\centering
\caption{Accuracy on each of the eight common-sense benchmarks for Llama-3 8B, evaluated without in-context examples. Both SpectrumQuant variants (with ${\bm Q}=\bm{UH}$ and ${\bm Q}=\bm{UB}$) match DartQuant in average accuracy within statistical error and exceed SpinQuant on every individual benchmark, while requiring far less calibration memory.}
\resizebox{\textwidth}{!}{
\begin{tabular}{clccccccccc}
\toprule
\multicolumn{1}{c}{Bits} & \multicolumn{1}{c}{\multirow{2}{*}{Method}} & \multirow{2}{*}{WG} & \multirow{2}{*}{SIQA} & \multirow{2}{*}{PIQA} & \multirow{2}{*}{OBQA} & \multirow{2}{*}{LAMB} & \multirow{2}{*}{HS} & \multirow{2}{*}{ARC-E} & \multirow{2}{*}{ARC-C} & \multirow{2}{*}{Avg $\uparrow$} \\
\multicolumn{1}{c}{(W-A-KV)} & & & & & & & & & & \\ \midrule
16 & Full Precision & 73.32 & 47.19 & 80.96 & 45.00 & 75.63 & 79.14 & 77.82 & 53.24 & 66.54 \\ \midrule
\multirow{4}{*}{4-4-4}
 & QuaRot & $67.01_{\pm 0.90}$ & $43.88_{\pm 0.76}$ & $74.94_{\pm 0.49}$ & $40.12_{\pm 0.59}$ & $66.11_{\pm 1.16}$ & $73.03_{\pm 0.48}$ & $68.86_{\pm 2.69}$ & $43.60_{\pm 1.72}$ & $59.70_{\pm 0.61}$ \\
 & SpinQuant & $68.59_{\pm 1.14}$ & $44.65_{\pm 0.89}$ & $75.88_{\pm 1.38}$ & $41.04_{\pm 0.71}$ & $68.57_{\pm 1.83}$ & $73.59_{\pm 0.72}$ & $72.41_{\pm 1.18}$ & $46.18_{\pm 1.19}$ & $61.36_{\pm 0.44}$ \\
 & DartQuant & $69.41_{\pm 0.98}$ & $45.17_{\pm 0.57}$ & $76.97_{\pm 0.42}$ & $42.84_{\pm 1.27}$ & $70.00_{\pm 0.62}$ & $75.38_{\pm 0.42}$ & $73.93_{\pm 0.58}$ & $48.21_{\pm 0.34}$ & $62.74_{\pm 0.35}$ \\
 \rowcolor{tablegray} & SpectrumQuant ($\bm{Q}=\bm{UH}$) & $70.10_{\pm 0.84}$ & $45.54_{\pm 1.04}$ & $76.22_{\pm 1.09}$ & $41.68_{\pm 0.72}$ & $71.21_{\pm 1.31}$ & $75.26_{\pm 0.47}$ & $73.07_{\pm 1.22}$ & $47.44_{\pm 0.89}$ & $62.57_{\pm 0.50}$ \\
 \rowcolor{tablegray} & SpectrumQuant ($\bm{Q}=\bm{UB}$) & $69.57_{\pm 0.55}$ & $45.30_{\pm 0.63}$ & $77.45_{\pm 0.63}$ & $42.36_{\pm 1.48}$ & $71.34_{\pm 1.09}$ & $75.12_{\pm 0.27}$ & $73.50_{\pm 1.13}$ & $48.07_{\pm 0.84}$ & $62.84_{\pm 0.20}$ \\ \bottomrule
\end{tabular}
}
\end{table}


\begin{table}[H]
\caption{Perplexity of each method on WikiText2, PTB and C4 for Mistral 7B v0.3. SpectrumQuant (with ${\bm Q}=\bm{UH}$) matches DartQuant within statistical error on all three datasets, showing that the closed-form alignment carries over to a model family beyond Llama.}
\centering
\begin{tabular}{clcccc}
\toprule
\multicolumn{1}{c}{{Bits}} & \multicolumn{1}{c}{\multirow{2}{*}{{Method}}} & \multirow{2}{*}{{Wiki}} & \multirow{2}{*}{{PTB}} & \multirow{2}{*}{{C4}} & \multirow{2}{*}{{Avg $\downarrow$}} \\
\multicolumn{1}{c}{(W-A-KV)} & & & & & \\ \midrule
16 & Full Precision & 5.32 & 37.34 & 8.48 & 17.04 \\ \midrule
\multirow{4}{*}{4-4-4}
 & QuaRot & $5.82_{\pm 0.02}$ & $45.27_{\pm 1.49}$ & $9.25_{\pm 0.01}$ & $20.11_{\pm 0.42}$ \\
 & SpinQuant & $5.76_{\pm 0.01}$ & $43.53_{\pm 0.77}$ & $9.20_{\pm 0.02}$ & $19.50_{\pm 0.26}$ \\
 & DartQuant & $5.67_{\pm 0.01}$ & $41.50_{\pm 0.90}$ & $9.02_{\pm 0.00}$ & $18.73_{\pm 0.30}$ \\
 \rowcolor{tablegray} & SpectrumQuant ($\bm{Q}=\bm{UH}$) & $5.67_{\pm 0.01}$ & $41.64_{\pm 0.50}$ & $9.00_{\pm 0.00}$ & $18.77_{\pm 0.17}$ \\
 \rowcolor{tablegray} & SpectrumQuant ($\bm{Q}=\bm{UB}$) & $5.67_{\pm 0.01}$ & $42.27_{\pm 0.53}$ & $9.01_{\pm 0.01}$ & $18.98_{\pm 0.18}$ \\ \bottomrule
\end{tabular}
\end{table}

\begin{table}[H]
\centering
\caption{Accuracy on each of the eight common-sense benchmarks for Mistral 7B v0.3, evaluated without in-context examples. SpectrumQuant (with ${\bm Q}=\bm{UH}$) achieves the highest average accuracy of all approaches, statistically tied with DartQuant, further supporting generalization across architectures.}
\resizebox{\textwidth}{!}{
\begin{tabular}{clccccccccc}
\toprule
\multicolumn{1}{c}{Bits} & \multicolumn{1}{c}{\multirow{2}{*}{Method}} & \multirow{2}{*}{WG} & \multirow{2}{*}{SIQA} & \multirow{2}{*}{PIQA} & \multirow{2}{*}{OBQA} & \multirow{2}{*}{LAMB} & \multirow{2}{*}{HS} & \multirow{2}{*}{ARC-E} & \multirow{2}{*}{ARC-C} & \multirow{2}{*}{Avg $\uparrow$} \\
\multicolumn{1}{c}{(W-A-KV)} & & & & & & & & & & \\ \midrule
16 & Full Precision & 74.74 & 47.65 & 82.05 & 47.60 & 75.00 & 80.67 & 80.01 & 54.18 & 67.74 \\ \midrule
\multirow{4}{*}{4-4-4}
 & QuaRot & $70.56_{\pm 0.56}$ & $45.24_{\pm 0.59}$ & $79.75_{\pm 0.50}$ & $44.20_{\pm 0.76}$ & $71.96_{\pm 0.32}$ & $77.98_{\pm 0.34}$ & $76.03_{\pm 0.40}$ & $49.23_{\pm 0.92}$ & $64.37_{\pm 0.21}$ \\
 & SpinQuant & $72.04_{\pm 0.73}$ & $45.66_{\pm 1.02}$ & $80.13_{\pm 0.45}$ & $44.68_{\pm 1.12}$ & $73.60_{\pm 0.36}$ & $78.31_{\pm 0.42}$ & $76.79_{\pm 0.67}$ & $50.21_{\pm 1.25}$ & $65.18_{\pm 0.18}$ \\
 & DartQuant & $71.63_{\pm 0.67}$ & $45.64_{\pm 0.50}$ & $80.60_{\pm 0.70}$ & $45.88_{\pm 1.14}$ & $72.90_{\pm 0.28}$ & $78.76_{\pm 0.28}$ & $77.08_{\pm 0.36}$ & $50.26_{\pm 0.89}$ & $65.34_{\pm 0.28}$ \\
 \rowcolor{tablegray} & SpectrumQuant ($\bm{Q}=\bm{UH}$) & $72.63_{\pm 0.88}$ & $46.03_{\pm 0.20}$ & $80.91_{\pm 0.37}$ & $44.36_{\pm 1.28}$ & $73.32_{\pm 1.11}$ & $78.68_{\pm 0.44}$ & $77.02_{\pm 0.54}$ & $50.66_{\pm 0.80}$ & $65.45_{\pm 0.34}$ \\
 \rowcolor{tablegray} & SpectrumQuant ($\bm{Q}=\bm{UB}$) & $72.71_{\pm 0.72}$ & $46.19_{\pm 0.93}$ & $80.64_{\pm 0.40}$ & $44.28_{\pm 0.67}$ & $73.23_{\pm 0.18}$ & $78.35_{\pm 0.43}$ & $77.17_{\pm 1.08}$ & $50.17_{\pm 0.56}$ & $65.34_{\pm 0.35}$ \\ \bottomrule
\end{tabular}
}
\end{table}


\begin{table}[H]
\caption{Perplexity of each method on WikiText2, PTB and C4 for Llama-2 70B. At this scale, both SpectrumQuant variants (with ${\bm Q}=\bm{UH}$ and ${\bm Q}=\bm{UB}$) match or improve on DartQuant in average perplexity, while using less than one-sixth of its calibration memory and finishing calibration in under 43 minutes on one GPU with pre-saved activations.}
\centering
\begin{tabular}{clcccc}
\toprule
\multicolumn{1}{c}{{Bits}} & \multicolumn{1}{c}{\multirow{2}{*}{{Method}}} & \multirow{2}{*}{{Wiki}} & \multirow{2}{*}{{PTB}} & \multirow{2}{*}{{C4}} & \multirow{2}{*}{{Avg $\downarrow$}} \\
\multicolumn{1}{c}{(W-A-KV)} & & & & & \\ \midrule
16 & Full Precision & 3.32 & 24.25 & 5.71 & 11.09 \\ \midrule
\multirow{4}{*}{4-4-4}
 & QuaRot & $3.802_{\pm 0.008}$ & $25.10_{\pm 0.44}$ & $6.126_{\pm 0.008}$ & $11.68_{\pm 0.14}$ \\
 & SpinQuant & $3.739_{\pm 0.016}$ & $26.92_{\pm 0.39}$ & $6.107_{\pm 0.026}$ & $12.26_{\pm 0.14}$ \\
 & DartQuant & $3.602_{\pm 0.004}$ & $26.39_{\pm 0.24}$ & $6.002_{\pm 0.007}$ & $12.00_{\pm 0.08}$ \\
 \rowcolor{tablegray} & SpectrumQuant ($\bm{Q}=\bm{UH}$) & $3.639_{\pm 0.035}$ & $26.34_{\pm 0.34}$ & $6.019_{\pm 0.013}$ & $12.00_{\pm 0.11}$ \\
 \rowcolor{tablegray} & SpectrumQuant ($\bm{Q}=\bm{UB}$) & $3.686_{\pm 0.020}$ & $25.522_{\pm 0.321}$ & $6.019_{\pm 0.012}$ & $11.74_{\pm 0.11}$ \\ \bottomrule
\end{tabular}
\end{table}

\begin{table}[H]
\centering
\caption{Accuracy on each of the eight common-sense benchmarks for Llama-2 70B, evaluated without in-context examples. SpectrumQuant (with ${\bm Q}=\bm{UH}$) obtains the highest average accuracy of all approaches, statistically tied with DartQuant, with calibration completed on a single GPU in under 43 minutes with pre-saved activations.}
\resizebox{\textwidth}{!}{
\begin{tabular}{clccccccccc}
\toprule
\multicolumn{1}{c}{Bits} & \multicolumn{1}{c}{\multirow{2}{*}{Method}} & \multirow{2}{*}{WG} & \multirow{2}{*}{SIQA} & \multirow{2}{*}{PIQA} & \multirow{2}{*}{OBQA} & \multirow{2}{*}{LAMB} & \multirow{2}{*}{HS} & \multirow{2}{*}{ARC-E} & \multirow{2}{*}{ARC-C} & \multirow{2}{*}{Avg $\uparrow$} \\
\multicolumn{1}{c}{(W-A-KV)} & & & & & & & & & & \\ \midrule
16 & Full Precision & 77.98 & 49.13 & 82.75 & 48.80 & 79.58 & 83.80 & 81.02 & 57.51 & 70.07 \\ \midrule
\multirow{4}{*}{4-4-4}
 & QuaRot & $77.38_{\pm 0.57}$ & $47.94_{\pm 0.57}$ & $81.70_{\pm 0.48}$ & $47.76_{\pm 0.62}$ & $78.65_{\pm 0.56}$ & $82.30_{\pm 0.23}$ & $78.88_{\pm 0.82}$ & $54.66_{\pm 0.68}$ & $68.66_{\pm 0.36}$ \\
 & SpinQuant & $77.58_{\pm 0.48}$ & $48.55_{\pm 0.88}$ & $82.01_{\pm 0.49}$ & $46.47_{\pm 0.81}$ & $78.70_{\pm 0.25}$ & $82.54_{\pm 0.24}$ & $78.87_{\pm 0.99}$ & $53.90_{\pm 1.33}$ & $68.58_{\pm 0.50}$ \\
 & DartQuant & $77.35_{\pm 0.81}$ & $48.37_{\pm 0.46}$ & $81.70_{\pm 0.42}$ & $47.76_{\pm 0.93}$ & $78.96_{\pm 0.53}$ & $83.07_{\pm 0.12}$ & $78.68_{\pm 0.86}$ & $55.05_{\pm 1.17}$ & $68.87_{\pm 0.42}$ \\
 \rowcolor{tablegray} & SpectrumQuant ($\bm{Q}=\bm{UH}$) & $78.15_{\pm 0.86}$ & $48.43_{\pm 0.83}$ & $81.65_{\pm 0.63}$ & $47.04_{\pm 1.37}$ & $78.91_{\pm 0.55}$ & $82.64_{\pm 0.19}$ & $79.36_{\pm 0.92}$ & $55.32_{\pm 2.09}$ & $68.94_{\pm 0.54}$ \\
 \rowcolor{tablegray} & SpectrumQuant ($\bm{Q}=\bm{UB}$) & $77.70_{\pm 0.73}$ & $48.22_{\pm 0.53}$ & $81.88_{\pm 0.69}$ & $47.60_{\pm 1.03}$ & $78.96_{\pm 0.20}$ & $82.43_{\pm 0.21}$ & $78.40_{\pm 0.98}$ & $53.80_{\pm 1.19}$ & $68.62_{\pm 0.37}$ \\ \bottomrule
\end{tabular}
}
\end{table}

\section{Comparison of Layerwise Quantization Error}
\label{appendix:layerwise-quant-error}

\begin{figure}[H]
  \centering
  \includegraphics[trim={0cm 2cm 0cm 0cm}, clip, width=\linewidth]{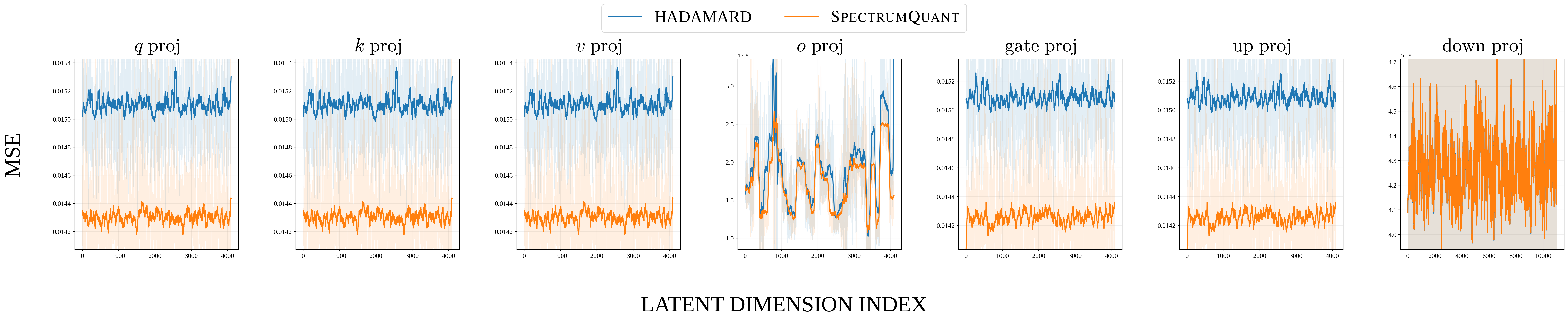}
  \includegraphics[trim={0cm 2cm 0cm 2.2cm}, clip, width=\linewidth]{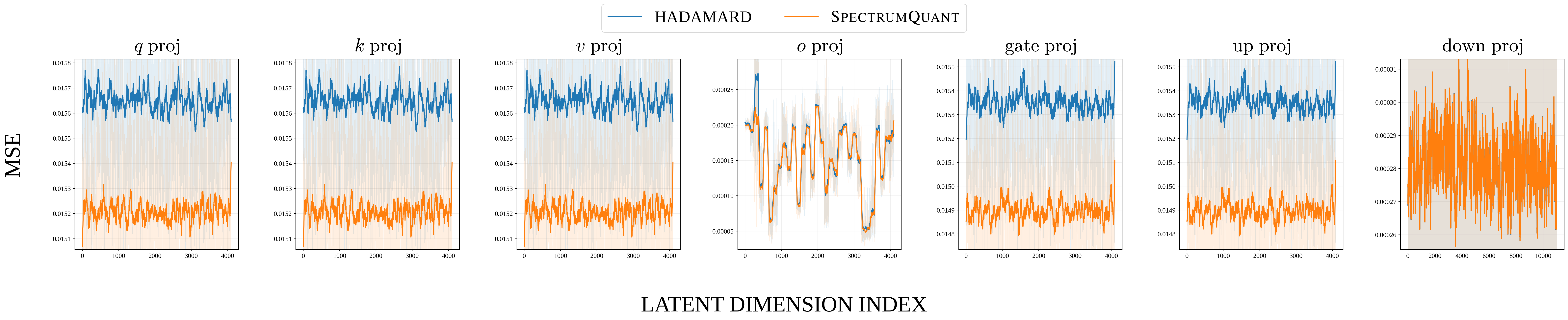}
    \includegraphics[trim={0cm 2cm 0cm 2.2cm}, clip, width=\linewidth]{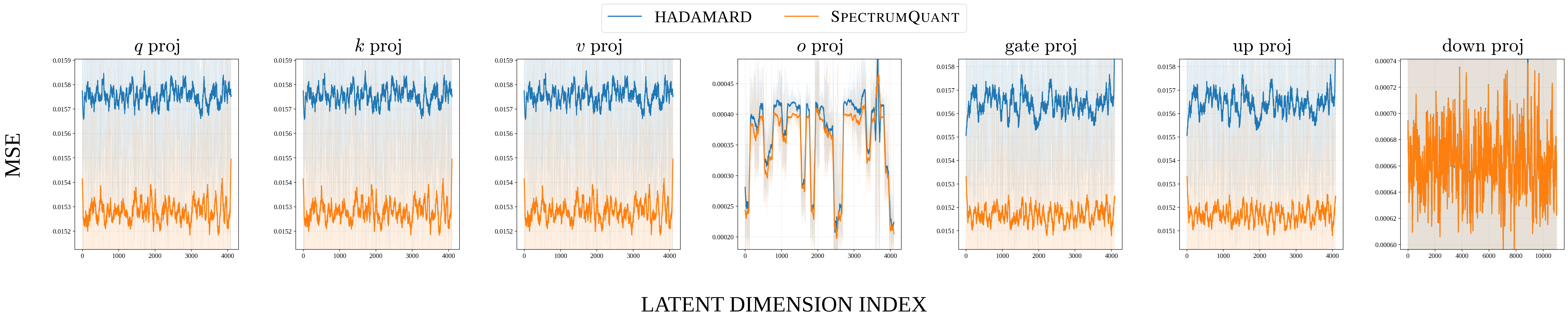}
    \includegraphics[trim={0cm 2cm 0cm 2.2cm}, clip, width=\linewidth]{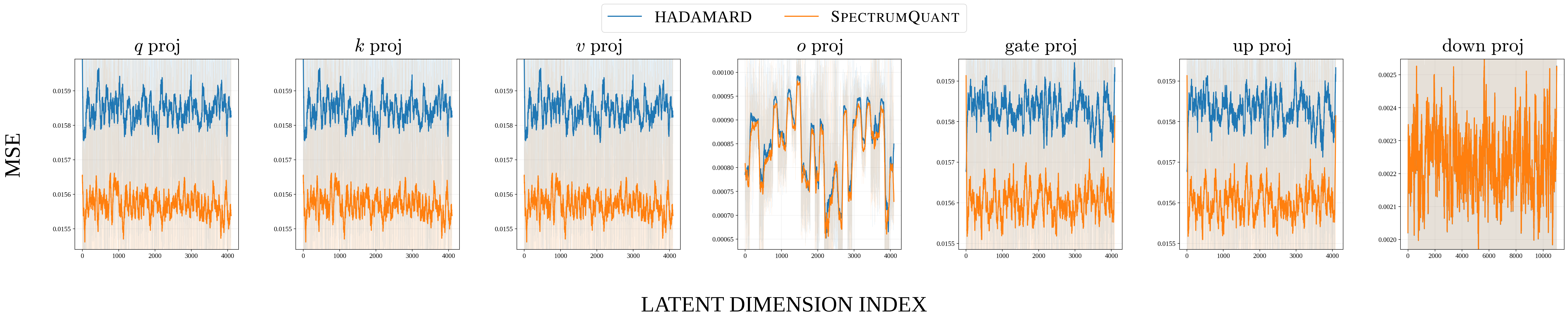}
    \includegraphics[trim={0cm 2cm 0cm 2.2cm}, clip, width=\linewidth]{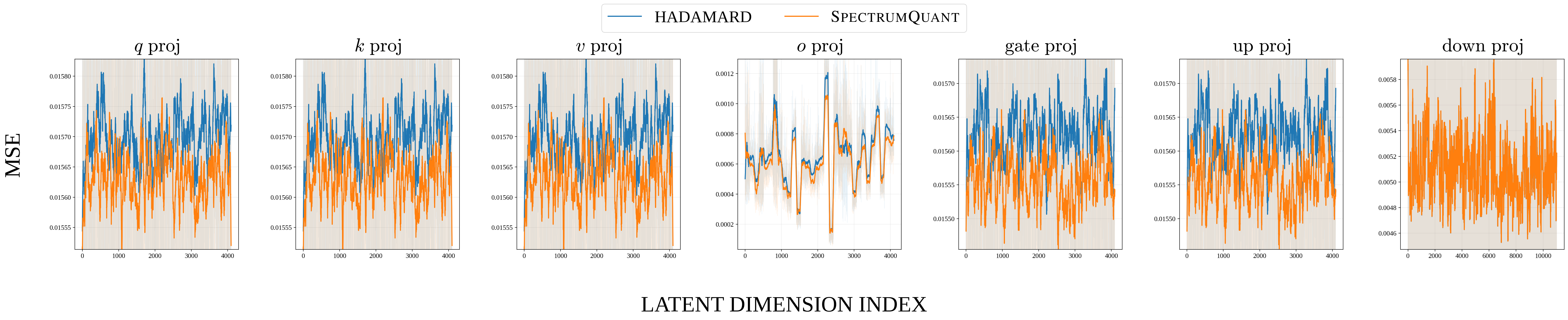}
    \includegraphics[trim={0cm 2cm 0cm 2.2cm}, clip, width=\linewidth]{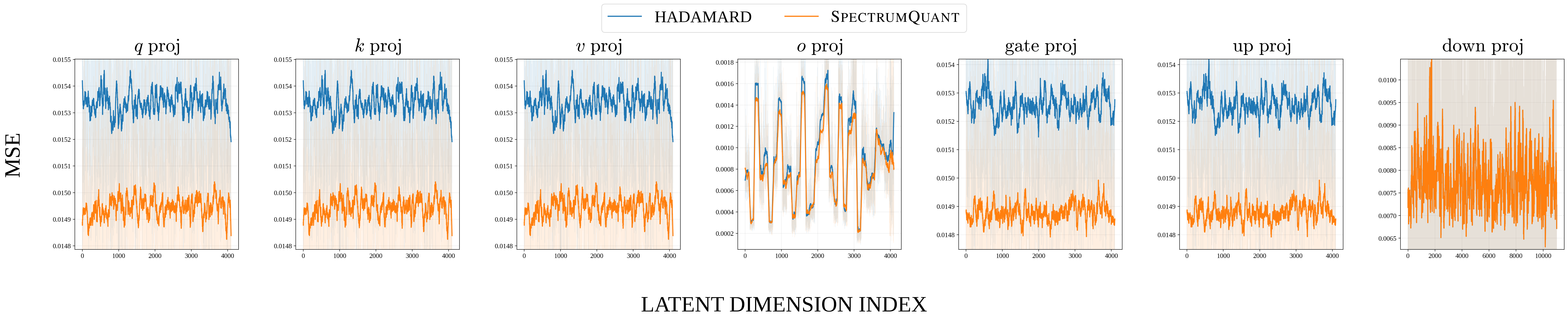}
  \includegraphics[trim={0cm 0cm 0cm, 2.2cm}, clip, width=\linewidth]{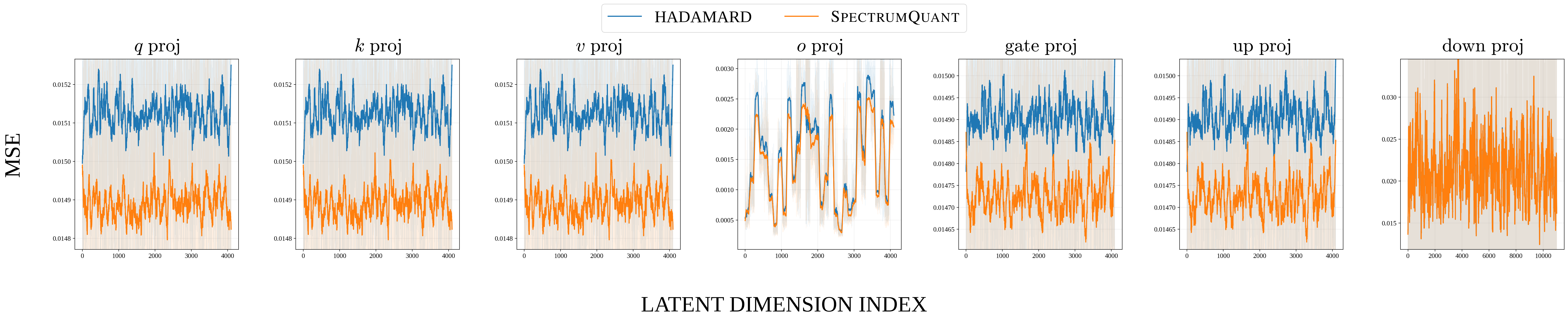}
  \caption{Comparison of sorted per-channel MSE across hidden dimensions for different projection activations in Llama-2 7B at layers 2,5,8,15,21,27,30 from the top to the bottom.}
  \label{fig:layerwise_quant_loss}
\end{figure}

\end{document}